\documentclass{article}
\usepackage{iclr2027_conference,times}

\usepackage{amsmath,amsfonts,bm}

\def\eqref#1{equation~\ref{#1}}

\def\1{\bm{1}}

\DeclareMathAlphabet{\mathsfit}{\encodingdefault}{\sfdefault}{m}{sl}
\SetMathAlphabet{\mathsfit}{bold}{\encodingdefault}{\sfdefault}{bx}{n}

\DeclareMathOperator*{\argmax}{arg\,max}

\usepackage{xstring}
\newcommand{\papersubmission}{preprint}   
\newcommand{\paperheader}{Under review as a conference paper at ICLR 2027}
\IfStrEq{\papersubmission}{underreview}{%
  \iclrfinalfalse\renewcommand{\paperheader}{Under review as a conference paper at ICLR 2027}}{}%
\IfStrEq{\papersubmission}{preprint}{%
  \iclrfinaltrue\renewcommand{\paperheader}{Preprint}}{}%
\IfStrEq{\papersubmission}{accepted}{%
  \iclrfinaltrue\renewcommand{\paperheader}{Accepted as a conference paper at ICLR 2027}}{}%
\IfStrEq{\papersubmission}{cameraready}{%
  \iclrfinaltrue\renewcommand{\paperheader}{Published as a conference paper at ICLR 2027}}{}%
\usepackage{xcolor}
\definecolor{citeblue}{rgb}{0.20,0.45,0.75}
\usepackage[colorlinks=true, citecolor=citeblue, linkcolor=red, urlcolor=black,hypertexnames=false]{hyperref}
\usepackage{url}
\usepackage{graphicx}
\ifdefined\pdfminorversion
\fi
\usepackage{float}
\usepackage{booktabs}
\usepackage{multirow}
\usepackage{enumitem}
\usepackage{algorithm}
\usepackage{algorithmic}
\usepackage{amsmath,amssymb}
\usepackage{amsthm}
\theoremstyle{definition}
\newtheorem{theorem}{Theorem}
\usepackage{pifont}
\usepackage{tikz}
\usetikzlibrary{positioning,arrows.meta,calc,fit,shapes.geometric,backgrounds}
\usepackage{pgfplots}
\pgfplotsset{compat=1.18}
\usepgfplotslibrary{groupplots}
\usepackage{tabularx}
\usepackage{longtable}
\newcolumntype{Y}{>{\raggedright\arraybackslash}X}
\usepackage{listings}
\newcommand{\takeaway}[1]{\vspace{0.3em}\noindent\textbf{#1}\ }
\newcommand{\up}[1]{\,\textcolor{green!55!black}{\scriptsize+#1}}
\usepackage{xspace}
\newcommand{\mc}{Mara Chain\xspace}

\newcommand{\MCc}{Mara Chain\xspace}

\title{Mara Chain: Rethinking Failure as a Stepping Stone for AI System Auto-Evolution}
\author{%
Yubin Lyu\textsuperscript{1}\thanks{Correspondence: \texttt{jacklv@pku.org.cn}, \texttt{jiawei.fei@kaust.edu.sa}. Code: \url{https://github.com/ant-research/AntOmniEvo}.}%
\And Fu Li\textsuperscript{2}%
\And Jiawei Fei\textsuperscript{3}\footnotemark[1]%
\And Yang Zhao\textsuperscript{3}%
\And Weixing Mei\textsuperscript{1}%
\And Yinan Wu\textsuperscript{1}%
\AND
\textsuperscript{1}Ant Group \quad \textsuperscript{2}Beijing Intelligent Game and Decision Lab \quad \textsuperscript{3}Beijing Defense Innovation Institute%
}

\begin{document}
\maketitle
\lhead{\paperheader}

\begin{abstract}

Optimizing deployed AI systems increasingly amounts to editing prompts, skills,
harnesses, and code rather than model weights. Existing approaches commonly
optimize these artifacts through propose--evaluate--select procedures, where
candidate configurations are evaluated and only those meeting an acceptance
criterion are selected. Yet our analysis shows that discarded candidates often
contain information critical for subsequent optimization. Discarding them causes
later proposals to revisit the same failure modes. We introduce \emph{Mara
Chain}, a refinement procedure that turns rejected candidates into stepping
stones. Rather than discarding a rejected candidate, Mara Chain retains and
iteratively refines it using evidence accumulated across preceding attempts.
The procedure limits each refinement chain to a fixed depth and applies
Pareto-filtered Top-$N$ selection to bound the candidate pool. Across AppWorld
skill optimization, TerminalBench~2.1 harness optimization, and MuSiQue
retrieval-pipeline optimization, Mara Chain delivers greater task-performance
gains with fewer rollouts.
It outperforms GEPA, ACE, and SkillOpt-Lite by up to $20.5\,\%$ in relative
performance on AppWorld, reaching the target score with $65.5\,\%$ fewer
rollouts than GEPA. It improves the pass rate by $20.2$ and $22.5$ percentage
points over AHE and Meta-Harness on TerminalBench~2.1, respectively, and
improves MuSiQue test nDCG@10 and Recall@10 by $0.104$ and $0.131$ over a
hand-written retrieval pipeline.

\end{abstract}
\vspace{-1.2em}
\begin{figure}[H]
\centering
\begin{minipage}[t]{0.48\textwidth}
  \centering
  \includegraphics[width=\linewidth]{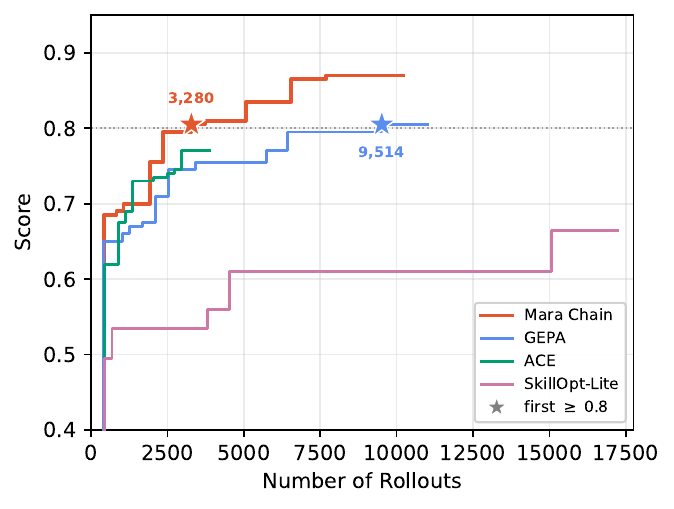}
  \par\vspace{-0.4em}{\small (a) AppWorld, GLM-5}
\end{minipage}\hfill
\begin{minipage}[t]{0.48\textwidth}
  \centering
  \includegraphics[width=\linewidth]{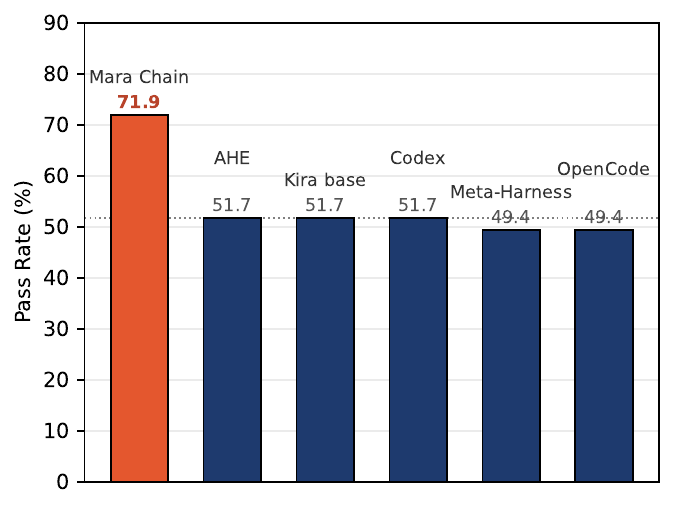}
  \par\vspace{-0.4em}{\small (b) TerminalBench-2.1, GLM-5}
\end{minipage}
\caption{Comparison on AppWorld and TerminalBench-2.1. On AppWorld, Mara Chain
reaches the target validation score using $65.5\,\%$ fewer rollouts than GEPA
and achieves a higher final score, while ACE and SkillOpt-Lite do not reach the
target score. On TerminalBench-2.1, Mara Chain achieves the highest pass rate
among the compared methods.}
\label{fig:main-results}
\end{figure}

\section{Introduction}
\label{sec:intro}

AI systems are increasingly optimized through mutable prompts, skills, harnesses, 
scripts, configurations, and code rather than model-weight updates. A standard
approach is to place these artifacts in a propose--evaluate--select loop, where
an LLM proposes a candidate configuration and an evaluator decides whether to
select it. Such procedures have been applied to prompts
\citep{yang2024opro,zhou2023ape,fernando2024promptbreeder,guo2024evoprompt,przyzant2023protegi},
compound systems \citep{khattab2024dspy,cheng2024trace,wu2025optimas,zhang2024aflow},
code \citep{novikov2025alphaevolve}, and agent skills and harnesses
\citep{yang2026skillopt,shen2026skilloptlite,lee2026metaharness,lin2026ahe}.
Unlike weight fine-tuning, these artifact-optimizing methods can adapt a deployed system
without changing its underlying model.

However, existing methods often suffer from \emph{persistent failure barriers}: 
they repeatedly re-analyze and re-attempt the same failing task, 
making no real progress even as the rollout budget grows.
Inspecting the rollout traces, we find that the culprit is how these methods handle candidates: 
any candidate that does not improve the score on the held-out set is discarded outright. 
Yet such a discarded candidate is rarely worthless. 
It may carry a partial fix that can be extended, 
implement a sound direction whose implementation contains a repairable bug, 
or be a wrong attempt whose measured effect rules out one route to the goal,
prompting later proposals to try a different one. By throwing these candidates away, 
existing methods discard the very evidence needed to move past the barrier, 
forcing each subsequent proposal to reconstruct the same diagnosis from scratch.
We argue that these discarded candidates are an underused resource: 
retaining and building on them, rather than throwing them away, 
is key to breaking the barrier and can substantially improve 
the performance of artifact optimization.

We introduce the \mc\footnote{The name alludes to M\={a}ra, the embodiment of
the obstacles that, in Buddhist tradition, confront the
practitioner on the path to awakening. In the same spirit, the method advances
through repeated setbacks: each attempt that fails to clear the acceptance bar
is retained rather than discarded, its failed experience accumulated as a
stepping stone, until the accumulated attempts jointly reach the goal.}, 
a procedure that, within the propose--evaluate--select loop for compound AI system optimization, 
refines rejected candidates instead of discarding them, 
forming a history-conditioned chain to exploit the evidence in failed attempts.
In each epoch, when a candidate does not clear the configured acceptance bar, 
Mara Chain retains its rollout traces, residual failures, structured analysis, and change history.
It then generates a sequence of descendants: at each step, the LLM analyzes the accumulated evidence 
and proposes the next candidate accordingly. 
To keep this bounded, the chain runs for only a fixed depth; 
if it produces no descendant that beats the initial candidate, 
the chain is discarded, so a failed chain adds no lasting cost.
Furthermore, to avoid an ever-growing candidate pool,
in which many candidates remain mutually non-dominated and cannot be pruned,
the candidate pool applies a top-$N$ truncation to
the Pareto-filtered set, keeping the $N$ candidates with the highest scores
on the validation set.
As a result, Mara Chain overcomes the stalling that traps existing methods while keeping its cost bounded.
We evaluate Mara Chain on three benchmarks---AppWorld~\citep{trivedi2024appworld},
TerminalBench~2.1~\citep{merrill2026terminalbench}, and MuSiQue~\citep{trivedi2022musique}---each
optimizing a distinct class of tunable artifacts, across three LLM models:
GLM-5~\citep{zai2026glm5}, DeepSeek-V4-Pro, and Qwen3.5-397B-A17B.
Figure~\ref{fig:main-results} presents the main comparisons on AppWorld and
TerminalBench~2.1.
On \textbf{AppWorld}, whose artifacts form a \emph{skill} configuration, Mara Chain
improves over the strongest skill optimizers---GEPA~\citep{agrawal2025gepa},
ACE~\citep{zhang2025ace}, and SkillOpt-Lite~\citep{shen2026skilloptlite}---by up to $20.5$\,\%,
reaches the target validation score using $65.5$\,\% fewer rollouts than GEPA,
and achieves a higher final score; ACE and SkillOpt-Lite do not reach the target score.
On \textbf{TerminalBench~2.1}, an \emph{agent-harness} configuration, the optimized harness
improves the pass rate by $39.1$\,\% over the harness optimizers
AHE~\citep{lin2026ahe} and Meta-Harness~\citep{lee2026metaharness}.
On \textbf{MuSiQue}, a \emph{retrieval-pipeline} configuration, Mara Chain improves test
nDCG@10 / Recall@10 by $34.6$\,\% / $39.8$\,\% over the hand-written default pipeline,
showing the approach is not specific to any single artifact type.

\section{Problem Statement}
\label{sec:problem}

\paragraph{Tunable artifacts.}
Tunable artifacts are the modifiable behavioral components of a system
under optimization, including prompts, scripts, harnesses, configuration files, and source
code. We represent these artifacts as a concrete directory, whether
the target system is an AI agent, a workflow, or a standalone algorithm. Rather
than optimizing a constrained parameter vector or a domain-specific language,
the optimizer directly adds, removes, and modifies files in this directory. For
each target system, the user specifies an \emph{artifact schema} comprising the
directory structure and the semantic role of each file. This schema establishes
the artifact mapping and defines the set of files that the proposer may modify
(App.~\ref{app:framework}). Consequently, any system with tunable artifacts
that can be mapped in this way and evaluated reproducibly---such as a skill,
evaluation harness, retrieval pipeline, or standalone computational kernel---can be
optimized by the same procedure.

\paragraph{Optimization objective.}
Given an artifact configuration $\sigma$ and a task distribution $\mathcal{D}$,
we seek
\begin{equation}
\label{eq:objective}
\sigma^\star \;=\; \arg\max_{\sigma}\;
\mathbb{E}_{\tau\sim\mathcal{D}}\!\left[\,\text{score}\bigl(\text{exec}(\sigma,\tau)\bigr)\,\right]
\quad\text{s.t.}\quad
\lvert\{\text{rollouts}\}\rvert \le B,
\end{equation}
where $\text{exec}(\sigma,\tau)$ executes the system instantiated with artifact
configuration $\sigma$ on task $\tau$, and $\text{score}\in[0,1]$ is the
per-instance performance measure.
Because each $\text{exec}(\sigma,\tau)$ is a costly rollout, this is an
optimization problem under a fixed rollout budget $B$: the goal is to attain
as high an expected score as possible while issuing a limited number of rollouts.


\paragraph{Persistent failure barriers.}
A task $\tau$ imposes a set of conditions. Executing $\sigma$ on $\tau$ yields a
\emph{residual} $\delta(\sigma,\tau)$: the unsatisfied conditions exposed by the
execution. Across successive candidate configurations, this residual can stay
invariant---unchanged no matter how much rollout budget is spent---rather than
being a merely hard condition that additional candidates or rollouts would
eventually resolve. We call this phenomenon a \emph{persistent failure barrier}:
a condition whose residual remains invariant across iterations and prevents
$\tau$ from succeeding.
Such barriers commonly arise under the standard propose--evaluate--select search
even when a correct modification is reachable, especially on long-horizon tasks.
As a result, the correct modification is repeatedly discarded and the same
residual is re-diagnosed across iterations. Many factors drive this, such as a
partial or well-directed change that yields no immediate score gain despite
being on the right track, a correct modification that is never exercised at execution,
or a target condition gated by unmet prerequisites. Regardless of the cause,
additional rollouts leave the residual unchanged.

\section{\MCc}
\label{sec:rc}

We introduce Mara Chain, an improved propose--evaluate--select procedure that
searches for a high-performing, task-specific artifact configuration. Mara Chain
is built on a key insight: a rejected candidate is not a dead end but evidence
for the next attempt. As shown in Figure~\ref{fig:architecture}, Mara Chain is
embedded in an outer population-based loop that validates improvements and,
through Pareto-filtered Top-$N$ selection, maintains a bounded candidate pool.
All rollout logs, scores, traces, and metadata are persisted in a
filesystem-backed memory, which the LLM-based proposer can access to retrieve
and analyze the complete history of prior attempts.

\begin{figure*}[t]
\centering
\includegraphics[width=0.85\textwidth]{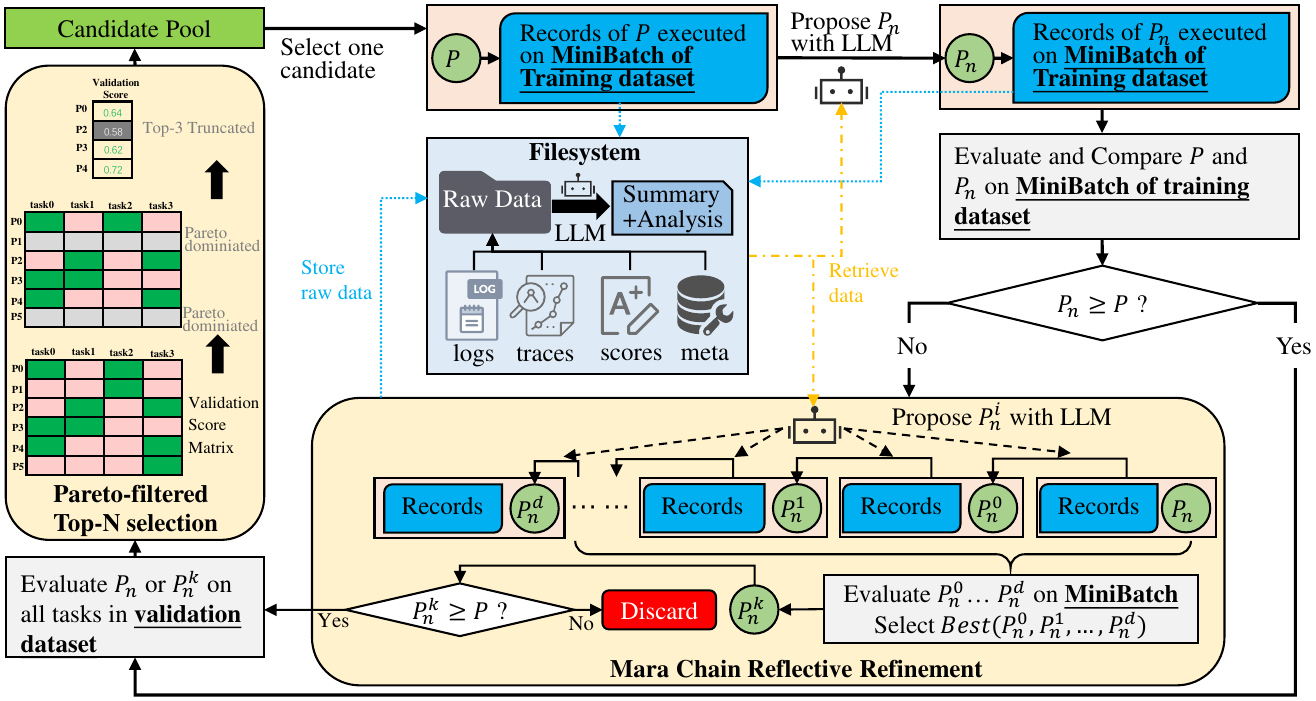}
\caption{Overview of one Mara Chain optimization slot. A parent $P$ is
sampled from the candidate pool using multi-dimensional leadership weights and
executed on a training minibatch. Rollout logs, traces, scores, and metadata are
  persisted to a filesystem-backed memory, where an analysis phase produces a structured
summary. The mutation phase uses the parent and that summary to generate $P_n$.
If $P_n$ clears the configured acceptance bar against $P$ on the same minibatch,
it is evaluated on the validation set. Otherwise, the Mara chain successively generates
$P_n^0,\ldots,P_n^d$ from the retained execution history, selects the
best-scoring chain candidate on the minibatch, and validates it only when it
clears that bar against $P$. Validated candidates undergo Pareto-filtered Top-$N$
selection: dominated
candidates are removed using validation score vectors, and a validation-score
top-$N$ truncation bounds the remaining pool. The process repeats until the
rollout budget is exhausted. Multiple slots execute this procedure in
parallel over the shared candidate pool.}
\label{fig:architecture}
\end{figure*}

\paragraph{Outer Optimization Loop.} Figure~\ref{fig:architecture} overviews the outer
propose--evaluate--select loop. The loop maintains a shared candidate pool
$\mathcal{P}$ of artifact configurations, invokes Mara Chain refinement procedure only when a
proposal fails the acceptance criterion, and runs multiple optimization slots in
parallel until the rollout budget is exhausted. Let
$B\subset\mathcal{D}_{\mathrm{tr}}$ denote a training minibatch and $V$ the
validation dataset. Each candidate $P\in\mathcal{P}$ is an artifact
configuration as defined in \S\ref{sec:problem}.

To select a parent for an available slot, Mara Chain favors candidates that 
lead on different score dimensions. For a
candidate $P$, let $z_j(P)$ denote its score on dimension $j$, and define its
leadership count as
\begin{equation}
\label{eq:leadership}
\ell(P) = \sum_j \mathbb{I}\!\left[
z_j(P) \geq \max_{Q\in\mathcal{P}} z_j(Q)-\varepsilon
\right].
\end{equation}
For multiple slots, parent candidates are sampled without replacement
with probabilities proportional to their leadership counts $\ell(P)$. This rule
favors candidates that are competitive on complementary dimensions while
retaining stochastic exploration.

The slot evaluates $P$ on $B$ to establish a minibatch baseline, then proposes
a candidate $P_n$. 
A candidate clears the acceptance bar when its
minibatch score improves sufficiently over that of its parent $P$ under
the threshold $\delta$.
If $P_n$ clears the acceptance bar against $P$, it is evaluated on all
validation tasks in $V$. Otherwise, the slot invokes Mara Chain refinement procedure. A candidate
returned by Mara Chain Reflective Refinement is forwarded to validation only if it clears the same
acceptance bar against $P$; otherwise, it is discarded. After validation, the
outer loop updates $\mathcal{P}$ using Pareto-Filtered Top-\texorpdfstring{$N$}{N} Selection. 
The loop stops when its rollout budget is exhausted.

\paragraph{Mara Chain Reflective Refinement.}
The \mc is a sequential, history-conditioned refinement procedure. It is
triggered only after the direct candidate $P_n$ does not clear the acceptance
bar against $P$ on the current minibatch. Rather than treating that rollout as discarded information,
the chain preserves the candidate, its execution records, the structured
analysis, and the artifact-change history. At chain step $i$, the proposer reads
this accumulated context and produces $P_n^i$; the rollout of $P_n^i$ then
extends the context for step $i+1$. Thus, all chain candidates form one lineage,
and every proposal is conditioned on the observed consequences of its
predecessors.

\begin{algorithm}[t]
\caption{Reflective refinement in Mara Chain.}
\label{alg:reflect}
\begin{algorithmic}[1]
\REQUIRE parent $P$ with baseline rollout $R_P$, rejected direct candidate $P_n$ with rollout $R_n$, minibatch $B$, acceptance bar $\delta$, depth $d$
\STATE $\mathcal{C} \gets [\langle P, R_P\rangle,\ \langle P_n, R_n\rangle]$ \COMMENT{Mara-chain nodes $\langle$candidate, rollout$\rangle$; root first}
\FOR{$i=0,\ldots,d$}
  \STATE $P_n^i \gets \texttt{store.create\_child}(\mathcal{C}.\mathrm{last})$ \COMMENT{inherits $\mathcal{C}.\mathrm{last}$'s artifacts}
  \STATE $\Gamma_i \gets \texttt{store.build\_context}(\mathcal{C})$ \COMMENT{score history, prior analyses, attributed changelog, run records}
  \STATE $A_i \gets \texttt{proposer.mara\_analyse}(\mathcal{C}.\mathrm{last}, \Gamma_i)$ \COMMENT{analyses persisted to \texttt{store}}
  \STATE $\texttt{proposer.mutate}(P_n^i, A_i)$ \COMMENT{edits $P_n^i$'s artifacts in place}
  \STATE $R_i \gets \texttt{rollout}(P_n^i, B)$; $s_i \gets \texttt{score}(R_i)$
  \STATE $\texttt{store.persist}(P_n^i, R_i, s_i)$; $\mathcal{C}.\texttt{add}(\langle P_n^i, R_i\rangle)$
  \STATE \textbf{if} $\textit{clears}(s_i, s_B(P), \delta)$ \textbf{then break} \COMMENT{stop once the bar is met}
\ENDFOR
\STATE $(\textit{best}, s_{\textit{best}}) \gets \argmax_{\langle C,R\rangle \in \mathcal{C}[1{:}]}\ \texttt{score}(R)$
\IF{$\textit{clears}(s_{\textit{best}}, s_B(P), \delta)$}
  \STATE \textbf{return} $\textit{best}$ \COMMENT{caller evaluates it on $V$}
\ENDIF
\STATE \textbf{return} \textsc{None}
\end{algorithmic}
\end{algorithm}

Algorithm~\ref{alg:reflect} specifies the procedure. Each chain step assembles
its context $\Gamma_i$ from the store: the per-instance score history across
all chain nodes, the structured analyses written for prior nodes, the
attributed artifact-change history along the chain, and the raw run records.
All chain nodes are scored on the same fixed
minibatch, and the chain returns its highest-scoring node only when it clears the
same configured acceptance bar as the direct candidate, stopping as soon as one
does. The store is not a
passive log: it retains raw rollout data and its LLM-generated analyses, which
make the accumulated evidence available to subsequent proposals.
The full store schema and analysis prompts are specified in App.~\ref{app:framework}.
This permits a
later candidate to extend a partial fix, repair the implementation bug of an
attempt whose direction was sound, or pursue the goal by a different route once
an earlier attempt has ruled one out. The
chain depth $d$ bounds the additional rollout cost of a failed direct proposal.



\paragraph{Pareto-Filtered Top-\texorpdfstring{$N$}{N} Selection.}
Validation produces a per-task score vector
$\mathbf{z}_V(P)\in[0,1]^{|V|}$ for each candidate. The pool update uses two
stages. First, a candidate $P$ is removed if another candidate $Q$ weakly Pareto
dominates it:
\begin{equation}
\label{eq:pareto}
Q \succeq P \quad\Longleftrightarrow\quad
z_{V,j}(Q) + \varepsilon \geq z_{V,j}(P)\quad \text{for all }j.
\end{equation}
For identical validation vectors, the more recent candidate is retained. This
per-instance criterion preserves candidates that excel on complementary subsets
of validation tasks, rather than prematurely reducing them to a single scalar
score.

Second, if the non-dominated set contains more than $N$ candidates, we retain
the $N$ candidates with the highest mean validation scores
$\frac{1}{|V|}\sum_j z_{V,j}(P)$ and remove the rest.
The first stage preserves complementary validation performance; the second
stage bounds memory and evaluation cost because high-dimensional per-task score
vectors leave many candidates mutually non-dominated, causing the candidate
pool to grow without bound under Pareto filtering alone.
Its implementation pseudocode and scheduler semantics are provided in App.~\ref{app:selim}.

\paragraph{Practical Implementation.}
We implement Mara Chain in a modular optimization framework whose pluggable
target-side, optimizer-side, and candidate-store components are instantiated per
setting with the target system and artifact schema. Component interfaces and
concrete artifact layouts are detailed in App.~\ref{app:framework}.
\section{Evaluation}
\label{sec:eval}

We evaluate Mara Chain on three benchmarks spanning distinct tunable artifact
configurations: AppWorld~\citep{trivedi2024appworld}, which optimizes skills
consisting of \texttt{SKILL.md} guidance, references, and scripts;
TerminalBench~2.1~\citep{merrill2026terminalbench}, which optimizes an
agent-harness configuration over $89$ agentic-OS tasks; and
MuSiQue~\citep{trivedi2022musique}, which optimizes a retrieval-pipeline
configuration. We conduct experiments across three frozen target LLMs:
GLM-5~\citep{zai2026glm5}, DeepSeek-V4-Pro-0813, and
Qwen3.5-397B-A17B; unless otherwise stated, GLM-5 is the default model. We use
the Pi coding agent~\citep{earendil2025pi} as the LLM-based proposer. We compare
against GEPA~\citep{agrawal2025gepa}, ACE~\citep{zhang2025ace}, and
SkillOpt-Lite~\citep{shen2026skilloptlite} for skill optimization, and
AHE~\citep{lin2026ahe} and Meta-Harness~\citep{lee2026metaharness} for
agent-harness optimization; Codex~\citep{openai2025codex} and
OpenCode~\citep{anomaly2025opencode} provide additional agent-harness references.
For MuSiQue, we compare against a hand-written default pipeline because existing
optimizers do not provide directly comparable retrieval-pipeline implementations.
App.~\ref{app:framework} specifies the implementation interfaces and artifact
schemas, while App.~\ref{app:expproto} documents the metrics, splits,
hyperparameters, and evaluation budgets.
Unless otherwise stated, we set the
Pareto-filtered Top-$N$ parameter to $N=3$ and the maximum number of Mara Chain
reflective refinement iterations to $d=5$.

\subsection{Main Results: Performance and Efficiency}
\label{sec:eval:main}


\paragraph{\textbf{AppWorld: Skills}.}
The AppWorld tunable artifacts form a \emph{skill configuration}:
natural-language \texttt{SKILL.md} guidance, references, and runnable scripts.
We evaluate on $585$ AppWorld tasks: $168$ \texttt{test\_normal} and $417$
\texttt{test\_challenge}; the training-pool and validation-set construction is
in App.~\ref{app:expproto}.

\begin{table}[t]
\centering
\small
\caption{AppWorld evaluation of optimized skill configurations on
$585$ recorded tasks ($168$ \texttt{test\_normal}; $417$
\texttt{test\_challenge}). TGC = task-grade completion,
SGC = scenario-grade completion; Normal and Challenge are the two test splits.
The green number is the improvement over the empty-skill configuration (pp).
Parentheses identify the slot configuration attaining each Mara Chain result.
Bold marks the best value per column.}
\label{tab:appworld}
\resizebox{\linewidth}{!}{%
\begin{tabular}{lllll}
\toprule
Optimized skill configuration & Normal TGC & Normal SGC & Challenge TGC & Challenge SGC \\
\midrule
Empty skill & 48.2 & 35.7 & 33.3 & 19.4 \\
SkillOpt-Lite~\citep{shen2026skilloptlite} & 70.8\up{22.6} & 51.8\up{16.1} & 68.3\up{35.0} & 46.0\up{26.6} \\
GEPA~\citep{agrawal2025gepa} & 83.3\up{35.1} & 67.9\up{32.2} & 69.3\up{36.0} & 44.6\up{25.2} \\
ACE~\citep{zhang2025ace} & 83.9\up{35.7} & 69.6\up{33.9} & 74.1\up{40.8} & 55.4\up{36.0} \\
Mara Chain & \textbf{89.9}\up{41.7} (2) & \textbf{83.9}\up{48.2} (2) & \textbf{76.7}\up{43.4} (1) & \textbf{59.0}\up{39.6} (1) \\
\bottomrule
\end{tabular}}
\end{table}

Table~\ref{tab:appworld} reports final performance across the Normal and
Challenge splits. Mara Chain attains the best result in every metric:
$89.9$/$83.9$ Normal TGC/SGC and $76.7$/$59.0$ Challenge TGC/SGC. It exceeds
the specialized skill optimizers GEPA, ACE, and SkillOpt-Lite across both task
types, showing that retained refinement improves final performance beyond
prompt-only and skill-only optimization baselines.
Figure~\ref{fig:main-results}(a) shows that Mara Chain achieves both higher
rollout efficiency and stronger final performance than the competing skill
optimizers. It reaches a validation score of $0.8$ after $3{,}280$ rollouts,
compared with $9{,}514$ for GEPA---about one third as many rollouts as the
strongest baseline; ACE and SkillOpt-Lite never reach this score. The
wall-clock view of the same trajectories shows the same gap
(Figure~\ref{fig:appworld-time}, App.~\ref{app:timecurve}): Mara Chain reaches
$0.8$ in $11.9$\,h, about $3\times$ faster than GEPA ($35.8$\,h). Mara Chain
ultimately reaches $0.87$, while GEPA plateaus at $0.805$.
As shown in Figure~\ref{fig:main-results}(a), all methods improve rapidly during the early rollouts, when many readily
diagnosed conditions remain. As these conditions are resolved, the remaining
long-horizon problems require dependent modifications and expose persistent
failure barriers. GEPA, ACE, and SkillOpt-Lite plateau in this regime, whereas
Mara Chain retains and refines evidence from rejected candidates to continue
improving. Mara Chain eventually plateaus because of the remaining
model-capability limits and its bounded depth $d$.

\paragraph{\textbf{TerminalBench~2.1: Harnesses}.}
The TerminalBench~2.1 tunable artifacts form an \emph{harness
configuration}. The harness optimizers
AHE and Meta-Harness are the baselines, with off-the-shelf CLI agents Codex and
OpenCode and the un-optimized Kira base as references. The system model is
GLM-5 across rows; the reported comparison remains end-to-end under the stated
configurations.

Figure~\ref{fig:main-results}(b) shows that Mara Chain substantially outperforms
both specialized harness optimizers and off-the-shelf coding agents on the
$89$-task TerminalBench~2.1 evaluation. The Mara-Chain-optimized harness
attains a $71.9\%$ pass rate, exceeding the next-best results of $51.7\%$ from
AHE, Codex, and the Kira base by $20.2$ percentage points, and surpassing
Meta-Harness and OpenCode ($49.4\%$) by $22.5$ points. The per-task binary
rewards underlying this comparison are provided in App.~\ref{app:tb2-cross}.

\paragraph{\textbf{MuSiQue: Retrieval Pipelines (Beyond Agentic Systems)}.}
The MuSiQue tunable artifacts form a \emph{retrieval-pipeline
configuration}: a DAG of retrieval nodes with tunable node configurations,
node scripts, and a prompt file
(App.~\ref{app:pipelinespec}). The baseline is a hand-written default pipeline,
against which we report the end-to-end optimized pipeline.
On MuSiQue (Table~\ref{tab:musique}), the optimized pipeline improves test
nDCG@10 by $+0.104$ and Recall@10 by $+0.131$ over the default, with validation
gains of $+0.134$/$+0.210$. 

\begin{table}[t]
\centering
\small
\caption{MuSiQue retrieval (4-hop). The optimized artifact is a retrieval-pipeline
configuration; the baseline is a hand-written default configuration. Metrics over top-10 retrieved
documents. Test is disjoint from the val split used for development.
w/o Mara Chain removes retained refinement
(Mara Chain depth $0$) at the same 2-slot budget. The green
number is the gain over the default baseline; Bold = best per column.}
\label{tab:musique}
\resizebox{\linewidth}{!}{%
\begin{tabular}{lllll}
\toprule
Artifact configuration & nDCG@10 (test) & Recall@10 (test) & nDCG@10 (val) & Recall@10 (val) \\
\midrule
Default (hand-written) & 0.301 & 0.329 & 0.419 & 0.395 \\
w/o Mara Chain (2-slot) & 0.371\up{0.070} & 0.421\up{0.092} & 0.505\up{0.087} & 0.560\up{0.165} \\
Mara Chain (2-slot) & \textbf{0.405}\up{0.104} & \textbf{0.460}\up{0.131} & \textbf{0.553}\up{0.134} & \textbf{0.605}\up{0.210} \\
\bottomrule
\end{tabular}}
\end{table}

These results show that Mara Chain is not limited to self-optimizing agentic
systems: whenever a system's tunable behavior can be mapped to editable
artifacts and evaluated on a target task distribution (\S\ref{sec:problem}),
the same procedure applies. MuSiQue instantiates this broader setting as a
retrieval DAG whose configurations, node scripts, and prompts are optimized
together.

\subsection{Ablation and Case Study}
\label{sec:eval:ablation}


\textbf{Effects on the final score.} Figure~\ref{fig:appworld-climb}(\emph{left}) reports the final scores of the
full setting (Mara Chain $+$ Top-$N$) versus the two ablations that disable the
Mara Chain flow or the Top-$N$ selection. Mara Chain's benefit grows with task
difficulty. On the easier Normal split its effect is barely visible: Normal
TGC is unchanged ($86.9$ vs $86.9$ w/o Mara Chain) and Normal SGC moves by
less than two points ($75.0$ vs $76.8$). On the harder Challenge split the
effect turns clearly positive: Challenge TGC $76.7$ vs $71.9$ ($+4.8$\,pp) and
Challenge SGC $59.0$ vs $51.8$ ($+7.2$\,pp). The gain is largest on the
hardest benchmark, TerminalBench~2.1, where the pass rate rises from $57.3\%$
(w/o Mara Chain) to $71.9\%$ ($+14.6$\,pp). Mara Chain's gain extends beyond
agentic benchmarks: on MuSiQue, disabling Mara Chain at the same 2-slot budget
lowers test nDCG@10 by $0.034$ and Recall@10 by $0.039$
(Table~\ref{tab:musique}). Disabling
Top-$N$ selection
instead degrades every AppWorld metric (Normal TGC/SGC $83.3$/$67.9$,
Challenge TGC/SGC $73.6$/$53.2$), so both mechanisms contribute to the full
setting's scores.

\begin{figure}[t]
\centering
\begin{minipage}[t]{0.49\textwidth}
\centering
\includegraphics[width=\linewidth]{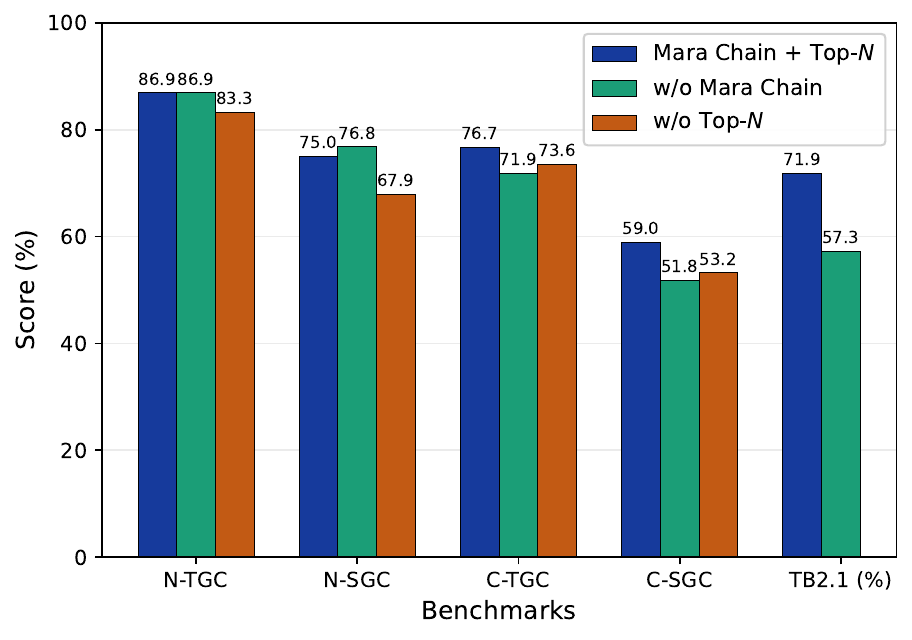}
\end{minipage}\hfill
\begin{minipage}[t]{0.49\textwidth}
\centering
\includegraphics[width=\linewidth]{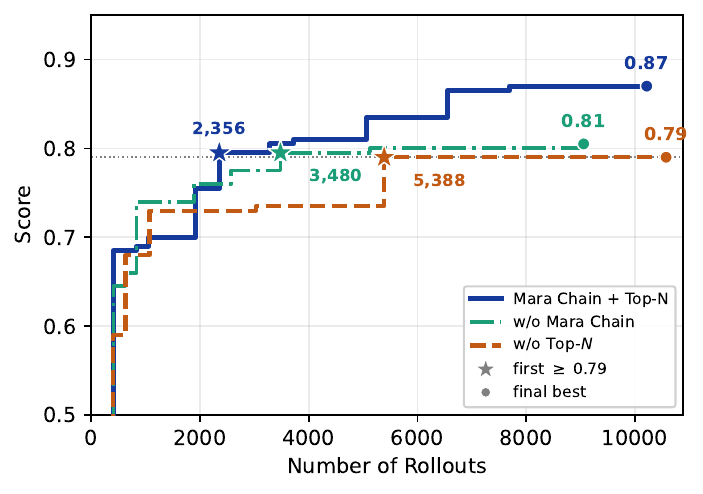}
\end{minipage}
\caption{\textbf{Left:} Ablation study of Mara Chain Reflective Refinement and Top-$N$ selection.
Mara Chain $+$ Top-$N$ enables both mechanisms; each ablation disables one of
them---the Mara Chain flow or the Top-$N$ selection---and is compared against
the both-on setting. All settings use the 1-slot configuration.
Top-$N$ selection uses $N=3$ on AppWorld and $N=1$ on
TerminalBench~2.1 (no w/o Top-$N$ result for TerminalBench~2.1).
\textbf{Right:} AppWorld validation score versus
optimization rollouts for the Mara Chain and Top-$N$ ablation.
Mara Chain with Top-$N$ reaches $0.79$ after $2{,}356$ rollouts and a final
score of $0.87$. w/o Mara Chain reaches $0.79$ after $3{,}480$ rollouts and
finishes at $0.81$, whereas w/o Top-$N$ reaches $0.79$ after $5{,}388$
rollouts. Combining Mara Chain with Top-$N$ is both the most sample-efficient
and the highest-scoring setting.}
\label{fig:appworld-climb}
\end{figure}

\textbf{Effects on sample efficiency.}
Figure~\ref{fig:appworld-climb}(\emph{right}) compares the optimization
sample efficiency of the three settings. The full setting reaches a validation
score of $0.79$ after only $2{,}356$ rollouts and continues to a final score of
$0.87$. The w/o Mara Chain ablation (Top-$N$ only) reaches $0.79$ after $3{,}480$
rollouts and finishes at $0.81$, while the w/o Top-$N$ ablation (Mara Chain only)
needs $5{,}388$ rollouts to reach $0.79$. Mara Chain and Top-$N$ are therefore
synergistic: together they are both more sample-efficient and higher-scoring
than either mechanism alone, and no single-mechanism setting attains the full
setting's final score at any comparable budget.

\begin{figure*}[!t]
\centering
\begin{minipage}[t]{0.412\textwidth}
\centering
\includegraphics[width=\linewidth,height=0.578\textheight,keepaspectratio]{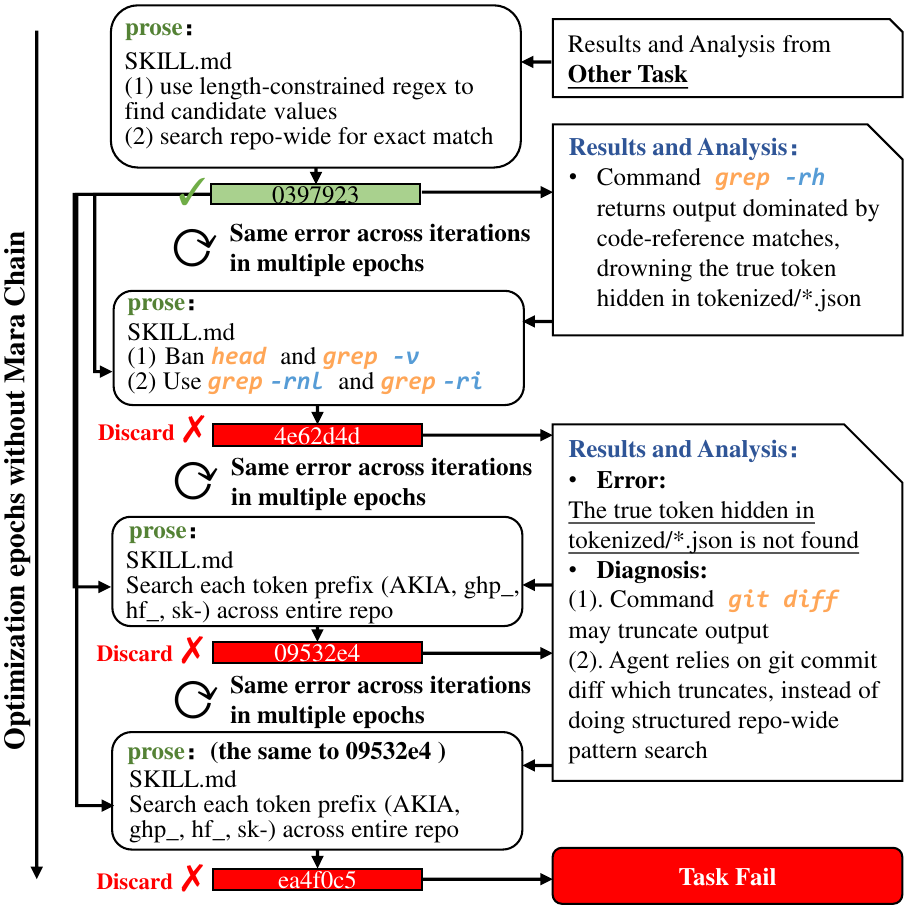}
\end{minipage}\hspace{0.06\textwidth}
\begin{minipage}[t]{0.412\textwidth}
\centering
\includegraphics[width=\linewidth,height=0.578\textheight,keepaspectratio]{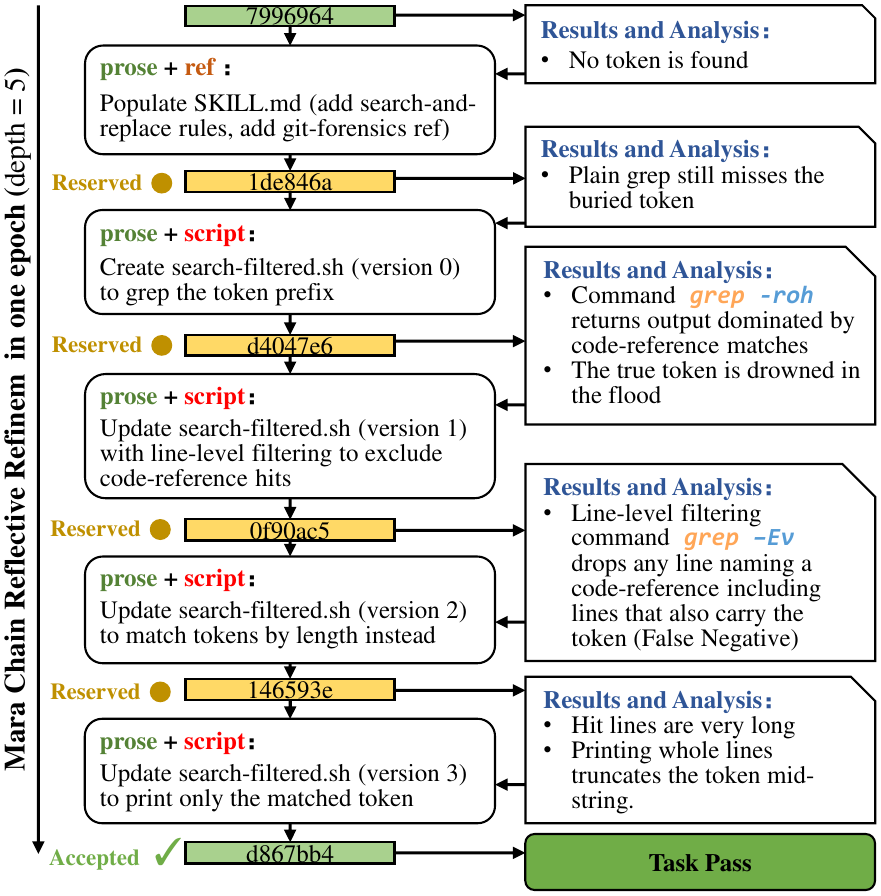}
\end{minipage}
\caption{Illustrative recorded case study of \texttt{sanitize-git-repo}.
\textbf{Left (without Mara Chain):} proposals introduce partial fixes, but each
candidate that fails to clear the acceptance bar is discarded; later iterations
therefore do not continue from these partial fixes, and the task remains failed.
\textbf{Right (with Mara Chain):} a direct candidate that does not clear the
bar is retained with its execution evidence and refined along one lineage.
Successive descendants build on partial fixes, move from prose guidance to a
script, correct the script's observed behavior, and finally pass the task.
Green, yellow, and red bars indicate accepted, reserved, and discarded
candidates.}
\label{fig:sanitize-workflow}
\end{figure*}


\textbf{Case study.}
Figure~\ref{fig:sanitize-workflow} records the \texttt{sanitize-git-repo} trace
on TerminalBench~2.1. It shows when Mara Chain Reflective Refinement matters: the task needs
several corrections, yet no single proposal clears its parent's
acceptance bar. In the displayed w/o Mara Chain setting, a candidate that
contains a partial fix but fails to clear the bar is discarded; the next propose
therefore starts from a different proposal instead of extending the partial fix,
and repeated iterations leave the task failing. Along the recorded Mara Chain
lineage, by contrast, candidates that miss the direct bar are retained with
their rollout evidence, residuals, and change history. Successive steps build on
the partial fixes, first moving from prose guidance to a script and then
correcting the script's observed behavior, until a candidate passes the task.
Notably, each refinement step along the Mara Chain lineage tends to escalate from
low-reliability prose guidance toward a higher-reliability script, so that the
final candidate's behavior is enforced by executable code rather than by
advisory text alone; in the w/o Mara Chain setting, by contrast, successive
proposals typically remain prose-level edits, whose effect is harder to enforce
reliably.
The trace thus highlights Mara Chain's advantage on long-horizon, hard
problems: instead of discarding failed candidates, it accumulates their partial
fixes along a lineage until the task passes, yielding solutions that are both
higher-performing and more reliable.
The complete recorded case material is provided in App.~\ref{app:cases}.

\subsection{Cross-Model Generalization}
\label{sec:eval:crossmodel}

We evaluate Mara Chain optimized skills and their corresponding empty-skill
configurations on AppWorld across three system models: GLM-5,
DeepSeek-V4-Pro-0813, and Qwen3.5-397B-A17B. 
Table~\ref{tab:crossmodel} shows paired evaluation differences across
all three models. On average over the four metrics, Mara Chain improves the
optimized skill over its empty-skill baseline by $+40.3$\,pp on GLM-5,
$+32.7$\,pp on DeepSeek-V4-Pro, and $+23.4$\,pp on Qwen3.5-397B-A17B. Across all
three models, Mara Chain improves every metric over the corresponding
empty-skill baseline, demonstrating that the optimization is not tied to any
single model and generalizes across different LLM models.

\begin{table}[t]
\centering
\small
\caption{Paired AppWorld evaluation of Mara Chain optimized skills versus their empty-skill configurations across three LLMs. $\Delta$ is the mean improvement across the four metrics.}
\label{tab:crossmodel}
\begin{tabular}{llccccc}
\toprule
LLM Model & Method & N-TGC & N-SGC & C-TGC & C-SGC & $\Delta$ \\
\midrule
\multirow{2}{*}{GLM-5} & Empty skill & 48.2 & 35.7 & 33.3 & 19.4 & --- \\
 & Mara Chain & 86.9 & 75.0 & 76.7 & 59.0 & +40.3 \\
\midrule
\multirow{2}{*}{DeepSeek-V4-Pro} & Empty skill & 66.7 & 39.3 & 60.9 & 30.9 & --- \\
 & Mara Chain & 93.5 & 83.9 & 81.5 & 69.8 & +32.7 \\
\midrule
\multirow{2}{*}{Qwen3.5-397B-A17B} & Empty skill & 63.7 & 48.2 & 50.1 & 30.2 & --- \\
 & Mara Chain & 82.7 & 73.2 & 73.6 & 56.1 & +23.4 \\
\bottomrule
\end{tabular}
\end{table}

\section{Related Work}
\label{sec:related}

\textbf{Prompt evolution.}
Over a \emph{prompt string}, AutoPrompt~\citep{shin2020autoprompt} searches
discrete tokens via input gradients; OPRO~\citep{yang2024opro} casts the LLM as
optimizer over a meta-prompt, APE~\citep{zhou2023ape} generates and selects
instructions, PromptBreeder~\citep{fernando2024promptbreeder} and
EvoPrompt~\citep{guo2024evoprompt} apply evolutionary mutate--select,
PromptWizard~\citep{agarwal2024promptwizard} self-evolves via critique and
synthesis, and ProTeGi~\citep{przyzant2023protegi} uses a textual gradient with
beam search. GEPA~\citep{agrawal2025gepa} applies Genetic-Pareto evolution with
reflection to the system's prompt strings. Compound-system optimizers like
TextGrad~\citep{yuksekgonul2025textgrad}, Trace~\citep{cheng2024trace},
Optimas~\citep{wu2025optimas}, Symbolic Learning~\citep{zhou2024symbolic},
DSPy~\citep{khattab2024dspy}, MIPRO~\citep{opsahlong2024mipro}, and
AFlow~\citep{zhang2024aflow} optimize pipelines or workflows, and
AlphaEvolve/FunSearch~\citep{novikov2025alphaevolve,romeraparedes2024funsearch}
and ADAS~\citep{hu2024adas} evolve code; MAP-Elites~\citep{mouret2015mapelites}
is the quality-diversity family these draw on.

\textbf{Harness evolution.}
AHE~\citep{lin2026ahe} and Meta-Harness~\citep{lee2026metaharness} search
harness code with an agentic proposer over filesystem-stored traces---AHE
wrapping analyze--mutate in a falsify-and-rollback loop, Meta-Harness leaving
\emph{how} to evolve to the agent itself. Self-Harness~\citep{zhang2026selfharness},
AutoHarness~\citep{lou2026autoharness}, and
DarwinX~\citep{zhang2026darwinx} are related variants.
ACE~\citep{zhang2025ace} evolves contexts as playbooks via
generate--reflect--curate (building on dynamic
cheatsheet~\citep{suzgun2025dynamiccheatsheet}), and SkillOpt /
SkillOpt-Lite / SkillCAT~\citep{yang2026skillopt,shen2026skilloptlite,chen2026skillcat}
optimize agent skills.

\textbf{Reflection and self-refinement.}
Reflecting on failures to drive the next attempt underlies
Reflexion~\citep{shinn2023reflexion} and
Self-Refine~\citep{madaan2023selfrefine}. Experiential agents accumulate
reusable skills or workflows---Voyager~\citep{wang2023voyager},
ExpeL~\citep{zhao2024expel}, Agent Workflow Memory~\citep{wang2024awm}---and
self-referential evolution targets the agent itself~\citep{zhang2025dgm,zhang2026hyperagents}.

\section{Limitation and Future Work}
\label{sec:limit}

\textbf{Limitations.}
Mara Chain diagnoses one batch at a time and can overfit that batch's failure
modes, which per-instance Pareto selection across batches only mitigates. Its
context construction and diagnosis logic are also a hand-designed first version
that remains fixed throughout the search.

\textbf{Future work.} One line tightens efficiency: chain depth adapts to
problem difficulty---deeper on hard barriers, earlier termination on easy
ones---with cheaper proposer models and budgeted depth reducing wall-clock and
token cost. A second widens exploration: diversity-aware selection beyond a
single test score, and branching the serial lineage into a parallel
\emph{Mara-tree}. A third makes the chain trustworthy and transferable:
per-node confidence estimates with fallback to the baseline, and effective
strategies distilled into a shared experience bank meta-learned across
artifact configurations and domains.
\section{Conclusion}
\label{sec:concl}

We introduced the \emph{\mc}, a history-conditioned refinement procedure for
the propose--evaluate--select optimization of tunable artifacts: instead of
discarding a rejected candidate, it retains the candidate's rollout traces,
residual failures, structured analysis, and artifact-change history and refines
them on the same training minibatch along a single lineage, converting evidence
from failed attempts into context for later proposals. Across skills, agent
harnesses, and retrieval pipelines (AppWorld, TerminalBench~2.1, MuSiQue), Mara
Chain outperforms specialized optimizers on both final performance and sample
efficiency, and the gains persist across GLM-5, DeepSeek-V4-Pro, and
Qwen3.5-397B-A17B. Ablations credit both mechanisms: retained refinement drives
the largest gains on the hardest tasks, and the outer loop's Pareto-filtered
Top-$N$ selection further improves final scores and sample efficiency. These
results indicate that retained, evidence-conditioned refinement is a general,
model-agnostic mechanism for AI-system auto-optimization.

%
%
\subsection*{AI use statement}

In this work, we used generative AI tools---the open-source OpenCode CLI and
an internal coding-assistant CLI, running the Kimi-K3 and GLM-5.2
models---to implement methods: the optimization framework behind our
experiments was co-developed with AI assistance through repeated
human-directed iterations, each experimental setup was built with AI
assistance, and the frontend was built with AI-generated code.
We also used these tools to organize and interpret experimental results,
including reading run logs, analyzing failure trajectories, and compiling
experimental data into the result tables, and to provide feedback on the
research methodology: a first draft of the methodology was written by AI from
phenomena the authors had observed in repeated experiments through our
run-visualization frontend, and AI was further used to inspect problem
trajectories and critique the methodology prompts. We have not used generative AI tools to generate synthetic data sets
or to formulate the mathematical claims or their proofs.

Additionally, we used generative AI tools to search for and organize related
literature and formatting requirements, to suggest the structure of this
paper, to draft and edit parts of the manuscript, and to create the figures
and tables.

The research idea and the overall architecture of the framework were
conceived by the authors; AI assistance filled in implementations and
executed code modifications under human direction, and all refactoring was
initiated and described by the authors. We have reviewed all AI-assisted
work: AI-drafted manuscript text was reviewed, revised, or rewritten by the
authors, and AI-written code was run, observed, and debugged through repeated
iterations before use. We take responsibility for the final content of this
work, including text, claims, and artifacts produced with the aid of
generative AI.

\nocite{novikov2025alphaevolve,zhang2026hyperagents,zhang2026selfharness}
\bibliography{iclr2026_conference}
\bibliographystyle{iclr2027_conference}

\appendix
\section{Framework Details}
\label{app:framework}

Figure~\ref{fig:pipeline} shows our evolutionary framework for optimizing a system by
evolving its \emph{tunable artifacts} with an agentic proposer, persisting
all state to a filesystem. We represent tunable artifacts
(\S\ref{sec:problem})---a system's editable prompts, scripts, configurations,
and runtime guards---as
a real directory of files that the optimizer mutates by adding, deleting, and
editing files in place, not a constrained parameter set or DSL---so that the
\emph{same} optimization loop can optimize systems of arbitrary form (a skill,
an agent harness, a retrieval pipeline) without re-wiring the optimizer for
each new family. The \mc (\S\ref{sec:rc}) runs on
this framework and is the focus of the paper; this appendix details the
substrate---the artifact configuration, the seven
pluggable components, the two-phase proposer, the concurrent scheduler, and the
filesystem memory. Figure~\ref{fig:pipeline} overviews the architecture and data
flow, and Algorithm~\ref{alg:mainloop} gives the full concurrent evolution loop
that invokes the chain.

\begin{figure}[ht]
\centering
\resizebox{\linewidth}{!}{
\begin{tikzpicture}[font=\footnotesize, >=Stealth, every node/.style={align=center}]
  \node[draw, rounded corners=2pt, fill=blue!6, minimum width=96mm, minimum height=11mm,
        text width=92mm] (schema) at (0,6.8)
        {\textbf{Artifact Schema} --- extensibility contract: a tunable artifact directory rendered into the proposer's prompt};

  \node[draw, rounded corners=2pt, fill=orange!10, minimum width=96mm, minimum height=13mm,
        text width=92mm] (sched) at (0,4.6)
        {\textbf{Concurrent Scheduler} (orchestrator)\\[1pt]
         \scriptsize $N$ slots over a shared pool;\;\; each slot runs \emph{evolve\_one}: run System $\to$ invoke Proposer $\to$ run System $\to$ score $\to$ [reflect] $\to$ validate $\to$ select/eliminate\\
         \scriptsize \fbox{slot$_1$}\;\; \fbox{slot$_2$}\;\; $\cdots$\;\; \fbox{slot$_N$}};

  \node[draw, rounded corners=2pt, fill=blue!4, minimum width=70mm, minimum height=15mm,
        text width=66mm] (proposer) at (0,2.5)
        {\textbf{Agentic Proposer}\\[1pt]
         \scriptsize two-phase coding agent: \emph{analyze} (reads raw runs) $\to$ \emph{mutate} (reads RunAnalysis);\\
         \scriptsize if the acceptance condition is unmet $\to$ \emph{reflect} (Mara chain, same batch)};

  \node[draw, rounded corners=2pt, fill=red!8, minimum width=44mm, minimum height=11mm,
        text width=40mm] (sys) at (-4.2,0.5)
        {\textbf{System}\\ \scriptsize runs the artifact configuration on a batch $\to$ trajectories + outputs};
  \node[draw, rounded corners=2pt, fill=red!8, minimum width=44mm, minimum height=11mm,
        text width=40mm] (ev) at (4.2,0.5)
        {\textbf{Evaluator}\\ \scriptsize scores $\in[0,1]$;\; \texttt{scoring\_criteria()} $=$ objective};

  \node[draw, rounded corners=2pt, fill=green!8, minimum width=48mm, minimum height=9mm,
        text width=44mm] (ea) at (-5.6,-1.2)
        {\textbf{EvolutionAlg plug}\\ \scriptsize select / eliminate (Pareto $+$ top-$N$)};
  \node[draw, rounded corners=2pt, fill=gray!8, minimum width=48mm, minimum height=11mm,
        text width=44mm] (mem) at (4.2,-1.2)
        {\textbf{CandidateStore}\\ \scriptsize artifact directory $\cdot$ data/ (raw runs, RunAnalysis) $\cdot$ changelog};

  \node[font=\scriptsize\itshape, gray!50!black, text width=128mm] (memnote) at (0,-2.5)
        {System runs and Evaluator scores persist to the store; analysis reads raw runs, mutation reads compressed RunAnalysis + an inherited changelog (raw runs never fed to the proposer), and both write back RunAnalysis and the mutated artifact configuration / changelog. Validation runs are isolated from the proposer; every edit is a traceable \texttt{SpecAction} that cites the failure it fixes.};

  \draw[->, thick, blue!60!black] (schema.south west) ..
        controls +(-2.6,0) and +(-2.6,0.6) ..
        node[left, font=\scriptsize\itshape, fill=white, inner sep=1pt]
        {render $\to$ proposer input\,} (proposer.north west);

  \draw[->, thick] (sched.south) --
        node[right, font=\scriptsize\itshape, fill=white, inner sep=1pt] {\,invoke propose} (proposer.north);
  \draw[->, thick] ([xshift=-3.2cm]sched.south) ..
        controls +(-0.4,-0.6) and +(0,0.6) ..
        node[left, font=\scriptsize\itshape, fill=white, inner sep=1pt] {run artifact configuration\,} (sys.north);

  \draw[->, thick] (sys.east) --
        node[above, font=\scriptsize\itshape, fill=white, inner sep=1pt] {\,trajectories} (ev.west);
  \draw[->, dashed, gray!70!black] (ev.north) ..
        controls +(0.4,1.0) and +(0,-0.4) ..
        node[right, font=\scriptsize\itshape, fill=white, inner sep=1pt] {\,score} ([xshift=3.2cm]sched.south);
  \draw[->, dashed, gray!70!black] (ev.south) --
        node[right, font=\scriptsize\itshape, fill=white, inner sep=1pt] {\,persist} (mem.north);

  \draw[->, dashed, gray!70!black] (ea.north) ..
        controls +(-0.6,2.0) and +(-0.6,-0.6) ..
        node[left, font=\scriptsize\itshape, fill=white, inner sep=1pt] {prune \& select\,}
        ([xshift=-4.4cm]sched.south);

  \draw[<->, dashed, gray!70!black] (proposer.east) .. controls +(2.4,-0.6) and +(2.4,1.4) ..
        node[right, font=\scriptsize\itshape, fill=white, inner sep=1pt]
        {\,read: raw runs / RunAnalysis\\write: artifact configuration / RunAnalysis} (mem.east);
\end{tikzpicture}}
\caption{Architecture and data flow (complements
Algorithm~\ref{alg:mainloop}). Each slot runs the candidate artifact configuration through the
system runner and scores it with the evaluator (whose scoring criteria is the
objective); the proposer is a two-phase coding agent whose analyze phase reads
raw runs and whose mutate phase reads the compressed \texttt{RunAnalysis} plus
an inherited changelog (raw runs are not fed to the mutation phase), writing the
mutated artifact configuration and \texttt{RunAnalysis} back to the candidate store. The
scheduler runs up to $N$ slots over a shared pool backed by the
evolution-algorithm plug and the candidate store. Only the artifact-configuration
instantiation and the plugins change across task families.}
\label{fig:pipeline}
\end{figure}

\paragraph{Seven pluggable components.}
The framework exposes seven pluggable components across three roles. The
\emph{target side} describes the system being optimized: the \emph{system
runner} runs the artifact configuration on a batch and returns trajectories and outputs; the
\emph{evaluator} scores a result and gives the reason; the \emph{data
instance} is the per-task contract. The \emph{optimizer side} drives
optimization: the \emph{optimizer} (the evolutionary loop) orchestrates
selection, proposing, evaluation, the \mc, validation, and elimination
under a budget (its default is the concurrent reflective
loop); the \emph{proposer} analyzes a failure and rewrites the artifact configuration into a
child; and the \emph{evolution algorithm} selects and eliminates candidates.
The \emph{candidate store} is the persistence layer. All seven are pluggable;
adding a system means subclassing the target side and defining an artifact
configuration.

\paragraph{Artifact schema and reliability.}
An artifact configuration $\sigma$ contains artifacts $A(\sigma)$, each with a
qualitative reliability prior $d(a)\in[0,1]$. Our concrete schemas order
representations as \texttt{prose} $<$ \texttt{script} $<$
\texttt{extension}: natural-language instructions may be ignored, scripts are
deterministic once invoked but may not be invoked, and runtime extensions are
deterministic when reached. When
rollout evidence indicates that an intended modification was bypassed, the
proposer may replace or augment it with a higher-reliability representation
permitted by the schema. The concrete directory layouts and reliability
instantiations are given in App.~\ref{app:schemas}.

\subsection{Pareto-Filtered Top-\texorpdfstring{$N$}{N} Selection}
\label{app:selim}

\paragraph{Selection and elimination pseudocode.}
The \texttt{ParetoFrontierEvolutionAlgorithm} implements multi-dimensional
leadership-weighted parent sampling and \emph{Pareto-filtered Top-$N$ selection}. 
Below is compressed pseudocode from
\texttt{evolution\_algorithm/pareto\_frontier.py}.

\textbf{select(num):}
\begin{algorithmic}[1]
\STATE $P \gets$ candidates with state=\texttt{pending}
\IF{$P = \emptyset$} \RETURN $[]$ \COMMENT{no work; scheduler waits}
\ENDIF
\IF{no candidate scored yet} \RETURN \texttt{random.sample}($P$, $k$)
\ENDIF
\FORALL{candidate $c \in P$}
  \STATE $d_c \gets$ \#dimensions where $c$ achieves $\max$ score (with $\varepsilon$ tolerance)
\ENDFOR
\STATE $k \gets \min(\text{num}, |P|)$
\STATE $S \gets$ \texttt{random.choices}($P$, weights=$\{d_c\}$, $k$) \COMMENT{weighted without replacement}
\RETURN $S$
\end{algorithmic}

\textbf{eliminate():}
\begin{algorithmic}[1]
\STATE $Q \gets$ candidates with state $\in \{\texttt{pending}, \texttt{evolving}\}$
\IF{$|Q| \leq 1$} \RETURN $[]$ \ENDIF
\STATE sort $Q$ by (\textit{generation}, \textit{created\_at}) desc
\COMMENT{Phase 1: weak Pareto dominance}
\FORALL{$a \in Q$ (alive)}
  \FORALL{$b \in Q$ (alive, $b \neq a$)}
    \IF{$a$ weakly dominates $b$} \STATE mark $b$ as dominated
    \ENDIF
  \ENDFOR
\ENDFOR
\STATE $F \gets$ alive candidates after Phase 1
\COMMENT{Phase 2: top-$N$ truncation}
\IF{$|F| > $ max\_candidate\_num}
  \STATE sort $F$ by (\textit{avg\_score}, \textit{generation}, \textit{created\_at}) desc
  \STATE retire $F[\text{max\_candidate\_num}:]$
\ENDIF
\COMMENT{only retire \texttt{pending}; \texttt{evolving} retires after finishing}
\RETURN eliminated pairs $(\text{cid}, \text{reason})$
\end{algorithmic}

\paragraph{Weak-Pareto dominance definition.}
Candidate $a$ \emph{weakly dominates} candidate $b$ (denoted $a \succeq b$) if
and only if $a$'s score is at least as good as $b$'s on every per-instance
dimension, modulo a numerical tolerance $\varepsilon$:
$$ a \succeq b \;\iff\; \forall\, i:\; a_i + \varepsilon \geq b_i,
\qquad \varepsilon = 10^{-8}. $$
This $\geq$ (not $>$) relation is used so that candidates with identical scores
are resolved by the pre-sort: deeper generation and newer creation time are
preferred, keeping the frontier biased toward more-evolved candidates. The
dominator $a$ need only be \emph{no worse} on any dimension---it need not be
strictly better. A candidate is removed only if \emph{some} other alive
candidate weakly dominates it.

\paragraph{Why top-$N$ truncation.}
Per-instance Pareto selection, adopted from GEPA, preserves complementary
winners but ceases to prune when the validation dimension $D$ grows. Under a
simple Bernoulli model, the probability that one candidate weakly dominates
another is $P(A\succeq B)=[1-p(1{-}p)]^{D}$, where $p\in(0,1)$ is a candidate's
per-instance pass probability. Because the optimized system is itself a
probabilistic model (an LLM agent), an instance outcome is stochastic rather
than certain---$p\notin\{0,1\}$---so $p(1{-}p)>0$ and the domination probability
decays exponentially with $D$, falling below $1\%$ at $D\geq20$. The Pareto
frontier then degenerates to the whole population, which bloats unchecked under
GEPA's Pareto-only selection. The top-$N$ Phase~2 truncation bounds the
population in this regime. Pairing the cap with per-instance dominance yields
\emph{Pareto-filtered Top-$N$ selection}.

\paragraph{Probability-scoring caveats under pass/fail.}
The \texttt{select} operation weights candidates by their max-score dimension
count $d_c$ (how many dimensions they lead). Under probabilistic task scoring
(each dimension is a Bernoulli trial with pass probability $p$), two effects
degrade this heuristic as the validation dimension $D$ grows:
\begin{itemize}[leftmargin=2em,itemsep=0.05em,topsep=0.05em]
  \item \emph{Frontier degeneracy.} $P(a \succeq b) = [1 - p(1{-}p)]^{D}$ falls
        below $1\%$ at $D \geq 20$, so nearly all candidates survive Phase 1 and
        the population bloats; the top-$N$ Phase 2 truncation bounds it, but the
        dimension-count weighting in \texttt{select} becomes noisy when many
        candidates share the same $d_c$ (ties are common under $p$ near $0$ or
        $1$).
  \item \emph{Sampling noise.} The weighted-without-replacement sampling
        distributes selection probability proportional to $d_c$, but $d_c$
        estimates are themselves noisy: a candidate that passed $k$ dimensions
        on a lucky batch may have $d_c$ inflated, causing it to be selected
        more often than its true quality warrants. Conversely, a candidate that
        would dominate on a \emph{different} batch may have $d_c = 0$ and be
        starved of selection opportunities.
\end{itemize}
These effects are inherent to any dominance-based EA under stochastic, sparse
rewards (pass/fail). The top-$N$ truncation mitigates population bloat; future
work could investigate adaptive dimension weighting or Thompson sampling to
address the sampling-noise issue.

\begin{table}[t]
\centering
\small
\caption{Component ablations on AppWorld, TerminalBench~2.1, and MuSiQue. The
full setting combines Mara Chain with Pareto-filtered Top-$N$ selection.
The \emph{$-$Mara Chain} setting removes retained refinement (single-shot
search, Mara-chain depth $0$) while keeping Pareto-filtered Top-$N$; the
\emph{$-$Top-$N$} setting keeps Mara Chain but relies on Pareto filtering
alone. AppWorld and TerminalBench~2.1 use the 1-slot configuration; MuSiQue
uses 2-slot. $-$Top-$N$ was not run on TerminalBench~2.1 or MuSiQue. Bold
marks the best result per metric.}
\label{tab:ablation}
\begin{tabular}{lccc}
\toprule
Metric & Mara Chain + Top-$N$ & $-$Mara Chain & $-$Top-$N$ \\
\midrule
N-TGC & \textbf{86.9} & \textbf{86.9} & 83.3 \\
N-SGC & 75.0 & \textbf{76.8} & 67.9 \\
C-TGC & \textbf{76.7} & 71.9 & 73.6 \\
C-SGC & \textbf{59.0} & 51.8 & 53.2 \\
TB2.1 (\%) & \textbf{71.9} & 57.3 & --- \\
MuSiQue nDCG@10 (test) & \textbf{0.405} & 0.371 & --- \\
MuSiQue Recall@10 (test) & \textbf{0.460} & 0.421 & --- \\
MuSiQue nDCG@10 (val) & \textbf{0.553} & 0.505 & --- \\
MuSiQue Recall@10 (val) & \textbf{0.605} & 0.560 & --- \\
\bottomrule
\end{tabular}
\end{table}

\subsection{Concurrent Scheduler}
\label{app:scheduler}

\paragraph{Concurrent scheduler.}
The scheduler runs up to $N=\texttt{num\_proposals}$ candidates concurrently
over a shared pool, so a slow slot (e.g., a hard batch in a long \mc) does not
block the others. Records carrying 1-slot and 2-slot labels differ, but do not
establish separate, additive, or chain-only effects; matched controls are
required for component attribution. Its correctness relies on the
following invariants.

\paragraph{Scheduler invariants.}
Three invariants hold throughout the concurrent loop: (1) \emph{no-select
back-off}---if \texttt{EA.select()} returns $[]$, the scheduler waits on
in-flight slots rather than deadlocking; (2) \emph{atomic state-machine
flips}---a candidate is flipped to \texttt{evolving} before its slot is
spawned, preventing double-selection (should \texttt{eliminate} ever become
\texttt{await}-able, serialization must be reintroduced); and (3)
\emph{checkpoint safety}---\texttt{evolving} candidates are reset to
\texttt{pending} on restart, and any candidate resumes from its
$(epoch,\,dataset\_index)$ cursor, with append-only JSONL preserving full
history.

\paragraph{Acceptance predicate and full concurrent loop.}
For a child score $c$, parent score $p$, and acceptance bar $\delta$, write
$\textit{clears}(c,p,\delta)$ for the benchmark- and configuration-dependent
predicate that determines admission to validation. Here $\delta$ names the
acceptance bar; the available documentation does not establish the predicate's
raw comparison semantics. The direct child is validated exactly when it clears
the bar, and the chain fires exactly when it does not; a chain winner is also
validated only when it clears the same bar.

Algorithm~\ref{alg:mainloop} is the compressed main loop each slot runs. The
\mc (Algorithm~\ref{alg:reflect}) is the only
non-mechanical step: it fires when a child misses the acceptance bar on its
batch and the depth cap $K>0$; $K{=}0$ disables the \mc and recovers
single-shot search. Parents and Mara-chain children are all scored on the
\emph{same} batch for a fair comparison, and the budget $\mathcal{B}$
(iterations / rollouts / system\_runs / tokens / elapsed) bounds the run, with
in-flight slots draining when it is exhausted.

\begin{algorithm}[h]
\caption{Main loop (compressed; full version and
concurrent-scheduler invariants in this appendix). The \mc
(Algorithm~\ref{alg:reflect}) fires when a child misses the acceptance bar.}
\label{alg:mainloop}
\begin{algorithmic}[1]
\REQUIRE determinism function $d(\cdot)$, initial configuration $\sigma_0$, candidate pool $\mathcal{P}$, budget $\mathcal{B}$, acceptance bar $\delta$, $K$ \COMMENT{Mara-chain depth cap; $K{=}0$ disables the \mc}
\STATE \textbf{concurrently} over up to $N$ slots, sharing $\mathcal{P}$ (open a slot only while $\mathcal{B}$ is unexhausted; in-flight slots drain):
\FOR{each slot that becomes free}
  \STATE $parent \gets \texttt{select}()$ \COMMENT{pick a pending candidate; if $[]$, wait on in-flight slots}
  \STATE flip $parent \to$ \texttt{evolving} \COMMENT{leave the selectable pool}
  \STATE $B \gets \texttt{derive\_batch}(parent.\textit{cursor})$
  \STATE $p \gets \texttt{evaluate}(parent,\, B)$ \COMMENT{baseline: parent on $B$}
  \IF{$p$ is full}
    \STATE \textit{skip mutation} \COMMENT{parent already perfect on $B$}
  \ELSE
    \STATE $child \gets \texttt{create\_child}(parent)$ \COMMENT{inherits artifact; cursor $\to$ next}
    \STATE $\textit{propose}(child,\, B)$ \COMMENT{analyze$\to$mutate the child}
    \STATE $c \gets \texttt{evaluate}(child,\, B)$ \COMMENT{same batch: fair comparison}
    \IF{$\textit{clears}(c,\, p,\, \delta)$}
      \STATE \textit{validate}$(child)$;\; flip $child \to$ \texttt{pending} \COMMENT{cleared the bar; joins the pool}
    \ELSE
      \IF{$K > 0$}
        \STATE $best \gets \textit{reflection\_chain}(child,\, B,\, \delta)$ \COMMENT{child missed $\Rightarrow$ reflect; Alg.~\ref{alg:reflect}}
        \IF{a winner was found}
          \STATE \textit{validate}$(best)$;\; flip $best \to$ \texttt{pending}
        \ENDIF
      \ELSE
        \STATE \textit{retire}$(child)$ \COMMENT{missed the bar, no \mc}
      \ENDIF
    \ENDIF
  \ENDIF
  \STATE advance $parent.\textit{cursor}$;\; flip $parent \to$ \texttt{pending}
  \STATE $\textit{eliminate}(\mathcal{P})$ \COMMENT{weak Pareto + top-$N$; every slot}
  \STATE \textit{persist}$(pool)$ \COMMENT{memory + checkpoint}
\ENDFOR
\end{algorithmic}
\end{algorithm}

\subsection{Proposer and Tunable-Artifact Edits}
\label{app:proposer}

\paragraph{Pi proposer extensions: anti-cheating and reliability guards.}
The proposer is an autonomous coding agent (we use the Pi coding
agent~\citep{earendil2025pi} for its unbypassable extension guards) with
filesystem and shell access. Two concerns arise from this autonomy, and
the framework addresses both through the Pi runtime's extension guard---the
\texttt{extension} guard of the schema (\S\ref{sec:problem}), which always
applies and the agent cannot bypass.

\emph{(i) Anti-cheating: validation isolation}
(\texttt{block-val-system-run.ts}). The proposer could otherwise read the
held-out validation trajectories and overfit the validation set. The extension
intercepts every \texttt{read/write/edit/bash/grep/find/ls} tool call and
blocks any whose target path matches \texttt{val\_system\_run/}:
\begin{lstlisting}
pi.on("tool_call", (event) => {
  const pathToCheck =
    event.input.file_path ?? event.input.command ?? /* ... */;
  if (/val_system_run/.test(pathToCheck))
    return { block: true,
             reason: "Access to val_system_run is not allowed ..." };
});
\end{lstlisting}
The proposer therefore cannot read validation-set trajectories, so the
validation score reflects generalization, not leakage.

\emph{(ii) Reliability: enforced changelog discipline}
(\texttt{enforce-changelog-cli.ts}). Because LLM instruction-following is
probabilistic, the proposer frequently ignores the instruction to use the
sanctioned CLI and writes \texttt{changelog.jsonl} itself, often producing
malformed JSONL (e.g., several JSON objects concatenated on one line) that
breaks downstream parsing. The extension blocks direct writes to
\texttt{changelog.jsonl}---via \texttt{write/edit} or shell
redirection/\texttt{tee}---and forces the proposer to record every edit through
the \texttt{append-changelog} CLI, which emits one well-formed JSONL object per
edit. This minimizes the probability of a malformed lineage record rather than
relying on the model to follow the format.

These guards constrain the proposer's access to the validation signal and its
lineage bookkeeping. They are a concrete instance of the ``hard runtime guard
the model cannot bypass'' instantiated by the schema's \texttt{extension} guard.

\begin{figure}[ht]
\centering
\resizebox{\linewidth}{!}{
\begin{tikzpicture}[font=\footnotesize, >=Stealth, every node/.style={align=center}]
  \node[draw, rounded corners=2pt, fill=blue!4, minimum width=32mm, minimum height=20mm,
        text width=28mm] (parent) at (-5.6,0)
        {\textbf{Parent candidate}\\[1pt]
         \scriptsize \texttt{spec/} (artifact directory)\\
         \texttt{data/} raw runs\\
         run$_1$, run$_2$, $\dots$, run$_n$};

  \foreach \i/\y in {1/1.35, 2/0, 3/-1.35}{
    \node[draw, rounded corners=2pt, fill=blue!8, minimum width=30mm, minimum height=8mm,
          text width=26mm] (an\i) at (-1.7,\y)
          {\scriptsize analysis agent\\ instance \i};
    \node[draw, rounded corners=2pt, fill=gray!8, minimum width=30mm, minimum height=8mm,
          text width=26mm] (ra\i) at (2.0,\y)
          {\scriptsize \texttt{RunAnalysis}\_\i\\ obs $+$ \texttt{SpecAction}s};
    \draw[->] (parent.east) .. controls +(-0.3,0) and +(-0.5,0) ..
          node[above, font=\tiny\itshape, fill=white, inner sep=1pt]{run\i} (an\i.west);
    \draw[->] (an\i.east) -- (ra\i.west);
  }
  \node[font=\scriptsize\itshape] at (-1.7,2.45) {Phase 1: \emph{analyze} --- multi-agent (one per instance)};
  \node[font=\scriptsize\itshape] at (2.0,2.45) {compressed intermediate};

  \node[draw, rounded corners=2pt, fill=blue!12, minimum width=32mm, minimum height=12mm,
        text width=28mm] (mut) at (5.8,0)
        {\textbf{mutation agent}\\[1pt]
         \scriptsize cwd $=$ child \texttt{spec\_dir}\\
         reads \texttt{RunAnalysis} $+$ changelog};
  \draw[->] (ra1.east) .. controls +(0.7,0.5) and +(-0.7,0.5) .. (mut.north west);
  \draw[->] (ra2.east) -- (mut.west);
  \draw[->] (ra3.east) .. controls +(0.7,-0.5) and +(-0.7,-0.5) .. (mut.south west);
  \node[font=\scriptsize\itshape] at (5.8,1.7) {Phase 2: \emph{mutate} (single agent)};

  \node[draw, rounded corners=2pt, fill=green!8, minimum width=30mm, minimum height=12mm,
        text width=26mm] (child) at (9.2,0)
        {\textbf{Child}\\[1pt]
         \scriptsize \texttt{spec/} (edited artifact directory)\\
         $+$ \texttt{changelog} (inherited)};
  \draw[->] (mut.east) -- node[above, font=\scriptsize\itshape, fill=white, inner sep=1pt]{edits} (child.west);

  \node[font=\scriptsize\itshape, gray!50!black, text width=130mm] at (1.8,-3.0)
        {Raw runs are read only by the per-instance analysis agents; the mutation agent
         receives only the compressed \texttt{RunAnalysis}, so its context stays stable as
         runs accumulate.};
\end{tikzpicture}}
\caption{Two-phase propose. \emph{Analyze} is multi-agent: one small analysis
agent per data instance reads that instance's raw run (in the parent's data
directory) and emits a structured \texttt{RunAnalysis}. \emph{Mutate} is a
single agent that reads only the compressed \texttt{RunAnalysis} set (plus an
inherited changelog) and edits the child's tunable artifacts---raw runs are
never fed to it,
so its context stays stable as runs accumulate.}
\label{fig:proposer}
\end{figure}

\paragraph{ArtifactAction and ChangeLog.}
Every mutation is a structured \texttt{ArtifactAction} that must cite at least one
traced failure---an AHE-style falsifiable contract: an action with non-empty
\texttt{resolves} referencing no observed failure is rejected. Each accepted
action is appended to the candidate's \texttt{changelog.jsonl} as a
\texttt{ChangeLogEntry}, and a child inherits its parent's changelog, so the
attributed edit history accumulates down the lineage. Their schemas:
\begin{lstlisting}
ArtifactAction {             // one tunable-artifact edit
  file: str                  // concrete path relative to the artifact root
  operation: "add" | "delete" | "modify"
  artifact_issue: str        // the defect in this file (what is wrong / missing)
  change: str                // exact change; BEFORE/AFTER format for "modify"
  resolves: list[int]        // indices into RunAnalysis.trajectory_analysis;
                             // non-empty -- must cite >=1 traced failure
}

ChangeLogEntry {             // one line in changelog.jsonl (inherited by children)
  timestamp: datetime
  type: "feat" | "fix" | "refactor" | "perf" | "docs" | "style" | "chore"
  subject: str               // imperative summary, <= 72 chars
  body: str                  // observed failure pattern + change + expected impact
  diff: str                  // unified (git) diff of the actual changes
  files_modified: list[str]
  author: str                // proposer identity (e.g. pi, claude_code)
}
\end{lstlisting}
\noindent The candidate state machine is
$\texttt{pending} \leftrightarrow \texttt{evolving} \to \texttt{unavailable}$:
\texttt{pending} candidates are selectable by the evolution algorithm,
\texttt{evolving} means a slot owns the candidate, and \texttt{unavailable} is
used by in-flight \mc children.

\paragraph{Proposer input and output.}
The mutation phase's operating prompt is assembled at render time from four
sources, none hardcoded in proposer code: (i) the
\emph{rendered schema}---the component tree plus each component's role and
determinism; (ii) the \emph{objective}, taken from the evaluator's
\texttt{scoring\_criteria()}; (iii) the \emph{diagnosis contract}, the
\texttt{RunAnalysis}/\texttt{ArtifactAction} schemas the analysis phase must
produce; and (iv) the \emph{compressed inputs from the analyze phase}---the
per-instance \texttt{RunAnalysis} set and the inherited changelog, not raw
runs. The mutation agent runs with the child's \texttt{artifact\_dir} as its working
directory.
The determinism ordering travels with the schema: when a fix at the
current component is bypassed, the rendered prompt instructs the proposer to
move it to a higher-determinism asset rather than restating it, so the
instruction is not hardcoded in the loop. This is the weld between the schema and
the \mc---the chain drives edits down the determinism gradient, and the
gradient itself is injected from the schema.

What the proposer ultimately delivers is a new child candidate: the edited
\texttt{artifact/} directory (the genome, mutated by applying the committed
\texttt{ArtifactAction}s) plus an appended \texttt{ChangeLogEntry} recording each
edit and the failure it resolves. The candidate is then scored by the evaluator
and, if accepted, enters the population with this attributed lineage inherited
by its own children. Full prompt templates and the rendering pipeline are in the
released code.

\subsection{Filesystem as Full-Fidelity Memory}
\label{app:memory}

\paragraph{Filesystem as full-fidelity memory.}
The filesystem is the substrate that makes \emph{optimize anything} tractable.
An evolution run produces far more data---full execution trajectories,
per-instance scores, the analyzer's structured \texttt{RunAnalysis}, the
proposer's attributed edits, and the lineage linking each child to its
parent---than could be held in memory or stuffed into an LLM context window.
Keeping all of it on a filesystem lets the analysis phase access the raw records
and the mutation phase access its structured outputs, including layouts that are
hard to anticipate in advance.
Persistence sits behind the candidate-store interface: the default is
a local filesystem, but it can be reimplemented to sync to a remote or cloud
backend, so the loop is not coupled to a single machine or storage medium.
This follows the filesystem-as-memory idea of AHE and
Meta-Harness~\citep{lin2026ahe,lee2026metaharness}; the on-disk layout is shown in
Fig.~\ref{fig:disklayout} and detailed below.

We keep this data on a filesystem rather than in an in-memory or
prompt-compressed representation for two reasons, each grounded in what we
observed during evolution runs:
\begin{itemize}[leftmargin=2em,itemsep=0.15em,topsep=0.15em]
\item \emph{No information loss, and drill-down on demand.} Keeping run and
        analysis data on disk means the analysis phase can drill down from a
        compressed \texttt{RunAnalysis} into the raw trajectory when a
        diagnosis is uncertain; the mutation phase receives the structured
        analysis rather than raw trajectories.
  \item \emph{Reproducibility and resumability.} Append-only logging makes
        every run reproducible and resumable from any checkpoint, which
        matters for evolution runs that span days and must survive crashes.
\end{itemize}

\paragraph{Licensed compression.}
The mutation phase reads a \emph{compressed} \texttt{RunAnalysis}, not the raw
runs, so its context stays stable as runs accumulate. This compression is
\emph{licensed}: each \texttt{RunAnalysis} is produced by a small per-instance
analysis agent whose job is small enough that distillation is deliberate, and
the raw runs remain on disk for drill-down---so it is not the trace-destroying
compression Meta-Harness~\citep{lee2026metaharness} warns against, but a
structured evidence layer on top of full-fidelity logs.

\paragraph{On-disk layout of a run's workspace.}
A run is a timestamped workspace; Figure~\ref{fig:disklayout} shows its
directory tree. Under \texttt{candidates/\{candidate\_id\}/}, \texttt{artifact/} is
the genome (inherited and mutated) and \texttt{data/} is the phenotypic memory,
holding \texttt{meta.json} (lineage, lifecycle state
$\texttt{pending}/\texttt{evolving}/\texttt{unavailable}$, and the per-candidate
progress cursor), an append-only \texttt{changelog.jsonl} (the attributed edit
history), \texttt{summary.json} (per-instance scores), the raw system and
validation runs, and the proposer's analysis and mutation artifacts. The
\texttt{val\_system\_run/} directory is hard-isolated from the proposer,
enforcing the validation isolation above. The sibling \texttt{logs/} directory
holds run-level artifacts that span the whole optimization:
\texttt{statistics.json} (best score, score history, candidate counts, usage),
an append-only \texttt{iteration\_records.jsonl} (one record per iteration),
and \texttt{parameters.jsonl} (the run's parameters).

\begin{figure}[H]
\centering
\begin{minipage}{0.99\linewidth}
\scriptsize\ttfamily\begin{verbatim}
workspace_dir/
+-- candidates/
|   \-- {candidate_id}/
|       +-- artifact/              # genome: inherited & mutated
|       \-- data/                  # phenotypic memory
|           +-- meta.json          # lineage, state, progress cursor
|           +-- changelog.jsonl    # append-only attributed edit history
|           +-- summary.json       # per-instance scores
|           +-- system_run/{data_id}/      # system run trajectories
|           +-- val_system_run/{data_id}/  # validation runs (proposer-forbidden)
|           \-- proposer_run/               # proposal agents' artifacts
|               +-- analysis/
|               |   +-- trajectory/{child_id}.json  # analyzer trajectory
|               |   \-- result/{data_id}.json       # RunAnalysis (diagnoses)
|               \-- mutation/{child_id}.json        # proposer trajectory & edits
\-- logs/                          # run-level, global across candidates
    +-- statistics.json            # optimization stats (best score, history, counts)
    +-- iteration_records.jsonl    # append-only, one record per iteration
    \-- parameters.jsonl           # run parameters
\end{verbatim}
\end{minipage}
\caption{On-disk layout of a run's workspace in our store.
\texttt{candidates/\{candidate\_id\}/} holds each candidate's \texttt{artifact/}
genome and \texttt{data/} phenotypic memory (run data, the analyzer's diagnoses
in \texttt{analysis/result/}, the proposer's edits in \texttt{mutation/}, and
the lineage); \texttt{logs/} holds the run-level statistics, iteration records,
and parameters.}
\label{fig:disklayout}
\end{figure}

\section{Artifact Schemas}
\label{app:schemas}

Each of the three benchmarks instantiates a different artifact configuration---a
directory of files with an artifact-reliability structure (\S\ref{sec:problem}).
Below we give the on-disk layout and the \texttt{TunableArtifactSchema} code
for each, and
Table~\ref{tab:schema-layers} makes the artifact-reliability instantiation
concrete: \emph{prose} artifacts have $d<\theta$ and may be bypassed,
\emph{script} artifacts execute deterministically once invoked (but invocation
may be skipped), and \emph{extension} artifacts execute deterministically when
reached. Mara Chain
inherits the artifact configuration along one lineage and can replace a
repeatedly bypassed prose artifact with a script or extension when supported by
the rollout evidence.

\begin{table}[ht]
\centering\small
\caption{Asset/determinism instantiation of each artifact configuration. $d(a)$ is a
qualitative prior, not a measured probability. Scripts are deterministic after
invocation, and ``on-demand'' assets are loaded only when the model invokes
them.}
\label{tab:schema-layers}
\setlength{\tabcolsep}{4pt}
\begin{tabularx}{\linewidth}{@{}l l l l Y@{}}
\toprule
Artifact configuration & Asset & Type & $d(a)$ & Role at runtime \\
\midrule
AppWorld & \texttt{skill/SKILL.md} & prose & $<\theta$ & strategy instructions (agent may ignore) \\
 & \texttt{skill/references/} & prose & $<\theta$ & API docs/examples, on-demand \\
 & \texttt{skill/scripts/} & script & $\ge\theta$ & verification helpers (deterministic when invoked) \\
\addlinespace
TerminalBench & \texttt{agent/kira\_agent.py} & extension & $\approx 1$ & harness code: loop/tools/middleware, cannot bypass \\
 & \texttt{agent/prompt-templates/} & prose & $<\theta$ & system prompt template (NL) \\
 & \texttt{skill/SKILL.md} & prose & $<\theta$ & task-solving strategy \\
 & \texttt{skill/scripts/} & script & $\ge\theta$ & verification helpers \\
 & \texttt{skill/references/} & prose & $<\theta$ & detailed docs, on-demand \\
\addlinespace
MuSiQue & \texttt{pipeline.json} & config & $\approx 1$ & DAG: node order + params, executed by runner \\
 & \texttt{nodes/*.py} & script & $\ge\theta$ & node implementations, executed, cannot bypass \\
 & \texttt{prompt/rewrite.md} & prose & $<\theta$ & rewrite prompt, read but not enforced \\
\bottomrule
\end{tabularx}
\end{table}

\paragraph{AppWorld (skill artifact configuration).}
{\scriptsize\begin{verbatim}
appworld-skill/
+-- skill/                      # NL instructions (agent may ignore)
|   +-- SKILL.md                # AppWorld strategy instructions
|   +-- references/             # API docs, examples (loaded on demand)
|   +-- scripts/                # verification helpers
|   +-- (additional files)
\end{verbatim}}
\texttt{APPWORLD\_SKILL\_TUNABLE\_ARTIFACT\_SCHEMA} in
\texttt{appworld\_tunable\_artifact\_def.py}:
{\scriptsize\begin{verbatim}
APPWORLD_SKILL_TUNABLE_ARTIFACT_SCHEMA: TunableArtifactSchema = FolderSchema(
    name="appworld-skill",
    files=[
        FolderSchema(name="skill", files=[
            FileSchema(name="SKILL.md",
                       description=_APPWORLD_SKILL_MD_DESCRIPTION),
            FolderSchema(name="scripts",
                         description=_SCRIPTS_DIR_DESCRIPTION),
            FolderSchema(name="references",
                         description=_APPWORLD_REFERENCES_DIR_DESCRIPTION),
            FileSchema(name="...",
                       description=_ADDITIONAL_FILES_DESCRIPTION),
        ]),
    ],
)
\end{verbatim}}
The skill contains prose, references, and optional scripts. A script is
deterministic after invocation but does not guarantee invocation; the observed
case trace records one progression to a script, not a necessary repair rule.

\paragraph{TerminalBench~2.1 (agent-harness artifact configuration).}
{\scriptsize\begin{verbatim}
terminalbench-kira/
+-- agent/                      # harness code (deterministic, cannot bypass)
|   +-- kira_agent.py           # AgentHarness class: agent loop, tools,
|   |                           # reminder/middleware, budget enforcement
|   +-- prompt-templates/
|   |   \-- terminus-kira.txt   # system prompt template (str.format)
|   +-- (additional files)
+-- skill/                      # NL instructions (agent may ignore)
    +-- SKILL.md                # task-solving strategy
    +-- scripts/                # verification helpers
    +-- references/             # detailed docs (loaded on demand)
    +-- (additional files)
\end{verbatim}}
\texttt{TERMINALBENCH\_KIRA\_TUNABLE\_ARTIFACT\_SCHEMA} in
\texttt{terminalbench\_kira\_tunable\_artifact\_def.py}:
{\scriptsize\begin{verbatim}
TERMINALBENCH_KIRA_TUNABLE_ARTIFACT_SCHEMA: TunableArtifactSchema = FolderSchema(
    name="terminalbench-kira",
    description=_KIRA_ARTIFACT_DIR_DESCRIPTION,
    files=[
        FolderSchema(
            name="agent",
            description=_AGENT_DIR_DESCRIPTION,
            files=[
                FileSchema(name="kira_agent.py",
                           description=_AGENT_MAIN_DESCRIPTION),
                FolderSchema(name="prompt-templates",
                             description=_PROMPT_TEMPLATES_DIR_DESCRIPTION,
                             files=[
                    FileSchema(name="terminus-kira.txt",
                               description=_PROMPT_TEMPLATE_DESCRIPTION),
                ]),
                FileSchema(name="...",
                           description=_ADDITIONAL_FILES_DESCRIPTION),
            ],
        ),
        FolderSchema(
            name="skill",
            description=_KIRA_SKILL_DIR_DESCRIPTION,
            files=[
                FileSchema(name="SKILL.md",
                           description=_KIRA_SKILL_MD_DESCRIPTION),
                FolderSchema(name="scripts",
                             description=_SCRIPTS_DIR_DESCRIPTION),
                FolderSchema(name="references",
                             description=_REFERENCES_DIR_DESCRIPTION),
                FileSchema(name="...",
                           description=_ADDITIONAL_FILES_DESCRIPTION),
            ],
        ),
    ],
)
\end{verbatim}}
\texttt{agent/} is the unbypassable extension guard (harness code always
runs); \texttt{skill/} is bypassable prose. These types permit edits such as
\texttt{kira\_agent.py} code at extension and \texttt{SKILL.md} rules at prose.

\paragraph{MuSiQue (retrieval-pipeline artifact configuration).}
{\scriptsize\begin{verbatim}
rag-pipeline/
+-- pipeline.json               # DAG config: node order + params (k, top_m, ...)
+-- nodes/                      # node implementations (deterministic Python)
|   +-- recall_bm25.py          #   BM25 retriever
|   +-- recall_dense.py         #   dense retriever (embedding)
|   +-- rerank_cross.py         #   cross-encoder rerank
|   +-- graph_rescore.py        #   graph-based rescore
|   +-- truncate.py             #   top-k truncation
|   +-- query_rewrite.py        #   LLM query rewrite (was skeleton)
|   +-- fuse_rrf.py             #   RRF fusion (added by optimizer)
|   +-- graph_expand.py         #   entity-overlap expansion (added)
\-- prompt/
    \-- rewrite.md              # query rewrite prompt (NL, node reads on demand)
\end{verbatim}}
\texttt{RAG\_PIPELINE\_TUNABLE\_ARTIFACT\_SCHEMA} in
\texttt{rag\_pipeline\_tunable\_artifact\_def.py}:
{\scriptsize\begin{verbatim}
RAG_PIPELINE_TUNABLE_ARTIFACT_SCHEMA: TunableArtifactSchema = FolderSchema(
    name="rag-pipeline",
    description=_ARTIFACT_DIR,
    files=[
        FileSchema(name="pipeline.json",
                   description=_PIPELINE_JSON),
        FolderSchema(name="nodes",
                     description=_NODES, files=[]),
        FolderSchema(name="prompt",
                     description=_PROMPT_DIR, files=[]),
        FileSchema(name="...",
                   description=_ADDITIONAL_FILES_DESCRIPTION),
    ],
)
\end{verbatim}}
\texttt{pipeline.json} and \texttt{nodes/} are deterministic (the runner
executes them, so $d\ge\theta$); the \texttt{prompt/} directory contains NL
prompts the system reads but does not enforce. The before/after evolved DAG is
in App.~\ref{app:pipelinespec}.

\section{Formal Conditions and Proofs}
\label{app:reflect:formal}

These statements are generic to any proposal process satisfying the following
assumptions; they do not prove that Mara Chain improves any probability. For a
task $\tau$, let $G_\tau=(C,\prec)$ be a finite directed acyclic graph
of conditions, where predecessors of $c$ must be satisfied before $c$ is
observable. Let $\mathrm{Sat}(\sigma)\subseteq C$ be the conditions satisfied
by configuration $\sigma$, and let
\[
F(\sigma)=\{c\in C\setminus\mathrm{Sat}(\sigma):
\mathrm{pred}(c)\subseteq\mathrm{Sat}(\sigma)\}.
\]
We assume \emph{residual completeness}: the evaluator reports every currently
unsatisfied observable condition, so $\delta(\sigma,\tau)=F(\sigma)$. An
accepted update is required to be non-regressing:
$\mathrm{Sat}(\sigma)\subseteq\mathrm{Sat}(\sigma')$. These are assumptions
about the abstraction, not properties established by the experiments.

\begin{theorem}[Persistent residual under unsupported accepted updates]
\label{thm:wall}
Let $c^\star\in\delta(\sigma,\tau)$. Under residual completeness and
non-regressing accepted-update semantics, if every accepted proposal distribution
assigns zero probability to updates that satisfy $c^\star$, then
$c^\star\in\delta(\sigma_t,\tau)$ after every accepted update $t$.
\end{theorem}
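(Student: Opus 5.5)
We argue by induction on the index $t$ of accepted updates, with $\sigma_0=\sigma$. The invariant is \emph{two-part}: (i) $c^\star\notin\mathrm{Sat}(\sigma_t)$, and (ii) $c^\star\in\delta(\sigma_t,\tau)=F(\sigma_t)$. Part (i) is what the hypothesis on the proposal distributions directly controls. Part (ii) is what the statement claims, and it needs the additional fact that $c^\star$ stays \emph{observable}.

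\textbf{Base case.} At $t=0$ we have $c^\star\in\delta(\sigma,\tau)=F(\sigma)$ by residual completeness. By the definition of $F$, this gives $c^\star\notin\mathrm{Sat}(\sigma)$ and $\mathrm{pred}(c^\star)\subseteq\mathrm{Sat}(\sigma)$.

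\textbf{Inductive step.} Suppose both parts hold at $\sigma_t$, and let $\sigma_{t+1}$ be the next accepted update. By hypothesis, the accepted proposal distribution puts zero mass on updates satisfying $c^\star$. So, almost surely, $c^\star\notin\mathrm{Sat}(\sigma_{t+1})$. Since there are countably many accepted updates, the union of these null events is null, and the conclusion holds almost surely simultaneously for all $t$; I would state this measure-theoretic point explicitly. For observability, use the non-regressing property $\mathrm{Sat}(\sigma_t)\subseteq\mathrm{Sat}(\sigma_{t+1})$. This gives $\mathrm{pred}(c^\star)\subseteq\mathrm{Sat}(\sigma_0)\subseteq\mathrm{Sat}(\sigma_{t+1})$ by chaining the inclusions. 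Together, $c^\star\in C\setminus\mathrm{Sat}(\sigma_{t+1})$ and its predecessors are satisfied, so $c^\star\in F(\sigma_{t+1})=\delta(\sigma_{t+1},\tau)$ by residual completeness.

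\textbf{Interpretation of the hypothesis.} I would fix the reading that ``updates that satisfy $c^\star$'' means updates $\sigma'$ with $c^\star\in\mathrm{Sat}(\sigma')$. I would also note that rejected proposals do not change the current configuration, so only accepted updates matter.

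\textbf{Main obstacle.} The only genuinely non-trivial point is part (ii): showing that $c^\star$ remains in the residual and does not merely remain unsatisfied. This is exactly where non-regression is essential. Without it, a predecessor of $c^\star$ could become unsatisfied, and $c^\star$ would drop out of $F$ while still failing. The DAG structure matters only through the definition of $F$; acyclicity is not otherwise used.
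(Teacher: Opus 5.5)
Your proof is correct and follows the paper's argument: induction over accepted updates, where the zero-support hypothesis keeps $c^\star$ unsatisfied, non-regression keeps $\mathrm{pred}(c^\star)$ satisfied so $c^\star$ stays observable, and residual completeness then places $c^\star$ in $\delta(\sigma_t,\tau)=F(\sigma_t)$. Your almost-sure qualification (the countable union of null events) is a legitimate sharpening; the paper's proof skips it by treating probability zero as impossibility.
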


\noindent\emph{Proof.} An accepted update cannot remove $c^\star$ by
assumption. Non-regression preserves the predecessors that make it observable,
and residual completeness reports it. Induction over accepted updates proves the
claim. \hfill $\square$

\paragraph{Realizability and conditional support.}
For every condition $c$ that becomes exposed under a configuration satisfying
its predecessors, \emph{realizability} means that at least one admissible,
accepted update exists that satisfies $c$ while preserving all currently
satisfied conditions. Let $\mathcal{H}_{c,j}$ be the sigma field generated by
the complete history before retained attempt $j$ at exposed condition $c$.

\begin{theorem}[Conditional repeated-attempt bound]
\label{thm:rc}
Suppose residual completeness, realizability, and non-regressing accepted
updates hold. Further assume, rather than infer, that for every exposed $c$ and
attempt $j$, $\Pr(\text{attempt $j$ proposes and accepts an update satisfying
$c$}\mid\mathcal{H}_{c,j})\ge p>0$. If $G_\tau$ has at most
$n$ conditions and
\[
m\geq\left\lceil\frac{\log(n/\eta)}{-\log(1-p)}\right\rceil,
\]
then allocating $m$ attempts to each exposed condition satisfies all conditions
with probability at least $1-\eta$, using at most $nm$ attempts.
\end{theorem}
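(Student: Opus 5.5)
The plan is a per-condition geometric-tail bound followed by a union bound over the at most $n$ conditions of $G_\tau$. First fix the process. Attempts are allocated to exposed conditions in some order, say a topological order of $G_\tau$ restricted to conditions as they become exposed. Each exposed condition $c$ receives a block of up to $m$ retained attempts, $j=1,\dots,m$, and the block stops early once an accepted update satisfies $c$. Non-regression is what makes this bookkeeping valid: once $c$ enters $\mathrm{Sat}$, it stays there under every later accepted update. So a condition never needs to be re-attacked, and each condition consumes at most one block. Since every condition is exposed at most once, the total number of attempts is at most $nm$.

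Next, bound the failure probability of a single block. Let $E_{c,j}$ be the event that attempt $j$ at $c$ proposes and accepts an update satisfying $c$. The assumption gives $\Pr(E_{c,j}^{\,c}\mid\mathcal{H}_{c,j})\le 1-p$. Moreover, $E_{c,1}^{\,c},\dots,E_{c,j-1}^{\,c}$ are $\mathcal{H}_{c,j}$-measurable, since $\mathcal{H}_{c,j}$ is generated by the complete history before attempt $j$. By the tower property and induction on $j$,
\[
\Pr\Bigl(\textstyle\bigcap_{j=1}^{m}E_{c,j}^{\,c}\Bigr)\le(1-p)^m .
\]
This step does not require independence, only the conditional lower bound holding along every history. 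I expect this to be the main point needing care. The difficulty is that $c$ is itself random: which conditions get exposed, and when, depends on the history. I would handle it by indexing blocks by the deterministic label $c\in C$ rather than by exposure time. I would define the block failure event $A_c$ as: $c$ is exposed at some point and all $m$ attempts in its block fail. The conditional bound then applies pathwise on the event that $c$ is exposed, and the estimate $\Pr(A_c)\le(1-p)^m$ still holds. Realizability is used only to make the hypothesis on $p$ meaningful, since it guarantees that an admissible accepted update exists. Residual completeness, via Theorem~\ref{thm:wall}'s mechanism, ensures that an unsatisfied exposed condition remains in $\delta$, so the allocation procedure actually sees it and assigns it a block.

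Finally, the union bound. Suppose no $A_c$ occurs. Then every condition that becomes exposed is eventually satisfied. Because $G_\tau$ is a finite DAG and $\mathrm{Sat}$ only grows, whenever $F(\sigma)=\emptyset$ and $\mathrm{Sat}(\sigma)\ne C$ we get a contradiction. To see this, pick a $\prec$-minimal unsatisfied condition: all its predecessors are satisfied, so it lies in $F(\sigma)$. Hence all conditions get satisfied. Therefore
\[
\Pr(\text{some condition unsatisfied})\le\sum_{c\in C}\Pr(A_c)\le n(1-p)^m .
\]
The choice $m\ge\log(n/\eta)/(-\log(1-p))$ gives $m\log(1-p)\le\log(\eta/n)$, so $n(1-p)^m\le\eta$. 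This yields success probability at least $1-\eta$ with at most $nm$ attempts. The edge case $p=1$ is trivial, with $m\ge1$ interpreted accordingly.
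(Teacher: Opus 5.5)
Your proposal is correct and follows the paper's argument: a per-condition bound $(1-p)^m\le\eta/n$ on all $m$ attempts failing, successors exposed in topological order via non-regression and residual completeness, and a union bound over at most $n$ conditions. Your tower-property induction, label-indexed failure events $A_c$, and $\prec$-minimal unsatisfied-condition argument spell out rigorously the steps that the paper's proof states in one line each.
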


\noindent\emph{Proof.} For an exposed condition, the conditional probability
that all $m$ attempts fail is at most $(1-p)^m\leq\eta/n$. Non-regression and
residual completeness expose successors in topological order. A union bound over
at most $n$ conditions gives the result. \hfill $\square$

\section{\MCc: Full Detail}
\label{app:reflect}

\subsection{Chain context}
\label{app:reflect:context}

For a direct proposal, the documented analysis phase reads the candidate's raw
run records, while the direct mutation phase receives only the resulting
compressed \texttt{RunAnalysis} and inherited changelog. For a Mara-chain
proposal, the documentation specifies the following lineage material for the
analysis phase, all pinned to the chain and the single failing instance:
\begin{itemize}[leftmargin=2em,itemsep=0.2em,topsep=0.2em]
  \item \textbf{Lineage path} ($v_0\to\cdots\to v_{k-1}$): the ordered sequence
        of prior failed children the node is continuing.
  \item \textbf{Per-instance score table with two delta columns}: each prior
        node's score plus $\Delta_{\text{root}\to\text{cid}}$ (cumulative gain)
        and $\Delta_{\text{prev}\to\text{cid}}$ (gain over the preceding node).
  \item \textbf{Attributed changelog}: the inherited edit history with every
        entry tagged by the candidate that authored it.
  \item \textbf{Prior nodes' \texttt{RunAnalysis} diagnoses}: inherited
        diagnosis content used to condition the next refinement step.
  \item \textbf{Prior nodes' run files}: the latest trajectory of each prior
        node on this instance, for cross-version comparison.
\end{itemize}
The chain mutation phase receives the resulting analyses and changelog rather
than raw run files. The documentation does not specify how any pre-loading,
direct file access, and store-mediated access interact within the analysis
phase; this scope is therefore unspecified. The node also receives the
inherited residual $\delta_{k-1}$.

\subsection{Five-method diagnosis and the category table}
\label{app:reflect:diag}

The Mara-chain analyze phase forces five diagnostic methods, each a concrete
operation on the cross-attempt context:
\begin{enumerate}[leftmargin=2em,itemsep=0.2em,topsep=0.2em]
  \item \textbf{Delta-guided evidence tracing.} Attack the inherited residual.
  \item \textbf{Success extraction.} Lift what a passing roll did right.
  \item \textbf{Oscillation detection.} A symptom that flips across nodes
        signals a regression to undo.
  \item \textbf{Inconsistency-pattern discovery.} Variance under the
        \emph{same} artifact configuration is nondeterminism, not an artifact bug.
  \item \textbf{Run-evidence corroboration.} Every diagnosis must cite a span
        in the raw run files.
\end{enumerate}

Each prior-attempt diagnosis is classified into a structured category table:
\begin{itemize}[leftmargin=2.4em,itemsep=0.1em,topsep=0.1em]
  \item[\textbf{A}] \emph{never-triggered}---the new content was never reached;
        fix the trigger or move it where it always fires.
  \item[\textbf{B}] \emph{content-wrong}---fired but wrong; REMOVE or REPLACE.
  \item[\textbf{C}] \emph{vague}---correct in spirit but unactionable; replace
        with a concrete WHEN/DO/VERIFY procedure.
  \item[\textbf{D}] \emph{conflict}---overridden by other content; resolve and
        cite the conflicting candidate.
  \item[\textbf{E}] \emph{root-cause-irrelevant}---the failure has a different
        cause.
  \item[\textbf{F}] \textit{direction-exhausted}---tried repeatedly with no
        gain; switch component or mechanism.
  \item[\textbf{G}] \emph{inconsistent}---same artifact configuration, different outcome;
        harden the winning path.
  \item[\textbf{H}] \emph{lost-win}---an effective change was removed or
        overridden; RESTORE and guard it.
  \item[\textbf{I}] \emph{mechanism-too-weak}---the artifact configuration already contains the
         correct content but the system bypassed it; \emph{raise the fix to a
         higher-reliability artifact representation}.
\end{itemize}
Preferences: prefer REMOVE/REPLACE over ADD; cite the responsible chain
candidate in each \texttt{artifact\_issue}; make no tit-for-tat additions; and hunt
\emph{suppression} edges and \emph{lost-win} entries so a fix for one instance
does not re-break another.

\subsection{Single-Shot Baseline}
\label{app:reflect:pm}

\paragraph{Single-shot search.}
By \emph{single-shot search} we mean the standard mutate--evaluate--select loop:
mutate a parent into a candidate, evaluate the candidate once on a batch of
tasks (a few trials only to denoise its pass rate), and admit it to the
population only if $\textit{clears}(s_B(\mathrm{child}),s_B(\mathrm{parent}),\delta)$
for the benchmark- and configuration-dependent acceptance predicate---otherwise
discard or revert it and propose a fresh candidate next iteration. A rejected candidate is never
re-run on the same tasks to learn why it failed. This is the regime of prior
artifact and prompt optimizers (\S\ref{sec:related}) and the baseline the \mc
improves; Algorithm~\ref{alg:mainloop} with $K{=}0$ recovers it.

Theorems~\ref{thm:wall} and \ref{thm:rc} are generic conditional statements,
not a comparison of the recorded configurations. The first concerns any
accepted-update process whose proposal support excludes a condition; the second
adds realizability, non-regression, and a history-conditional positive-support
assumption. Neither assumption is established by the empirical comparisons.

\section{Experiment Protocol}
\label{app:expproto}

\paragraph{Data selection.}
We pick three benchmarks that span three classes of tunable artifacts
(\S\ref{sec:problem}): AppWorld (skill artifact configuration),
TerminalBench~2.1 (agent-harness artifact configuration), and MuSiQue
(retrieval-pipeline artifact configuration). Documented comparisons use harness
optimizers (AHE, Meta-Harness), human CLI agents (Codex, OpenCode), and the
un-optimized base on TerminalBench~2.1, and the hand-written default pipeline on
MuSiQue. AppWorld records are static evaluations of optimized skill configurations, not
documented optimizer runs or a method comparison.

\paragraph{Data, splits, and benchmark versions.}
\emph{AppWorld} (v0.2.0.dev0, commit \texttt{e9325d3}): we use the benchmark's
\texttt{train\_val\_split.json}, which merges the official \texttt{train} and
\texttt{dev} tasks into a \textbf{147}-task training pool from which
optimization minibatches are drawn. The validation set contains \textbf{200}
tasks sampled from the \texttt{test\_normal}/\texttt{test\_challenge} pools
(\textbf{100} each, seed $42$), stratified so that $\sim$$10\%$ are solvable
by the empty-skill baseline agent ($23$ pass / $177$ fail). Final static
evaluations are reported on the full test splits: \textbf{585} tasks
(\textbf{168} \texttt{test\_normal} and \textbf{417}
\texttt{test\_challenge}; Table~\ref{tab:appworld}), which therefore include
the $200$ validation tasks; validation scores are used only for optimization
guidance and climbing-speed curves, never reported as final results.
\texttt{max\_steps}$=50$ throughout.
\emph{MuSiQue}: \texttt{tools/prepare\_musique.py} converts the raw
\texttt{.jsonl} into BEIR-style files (\texttt{corpus.jsonl} with globally
unique \texttt{doc\_id}; \texttt{queries.jsonl} with gold from
\texttt{is\_supporting=True}). We use the 4-hop subset:
\textbf{300} train (from \texttt{musique\_full\_v1.0\_train}, 4-hop, first 300
with gold) / \textbf{100} val (from \texttt{musique\_full\_v1.0\_dev}, 4-hop,
100 for development) / \textbf{300} test (from the \texttt{dev} full set,
4-hop, excluding the 100 val ids; test and val have zero overlap)
(Table~\ref{tab:musique}).
\emph{TerminalBench~2.1}: \textbf{89} tasks; there is no separate val
set---all 89 tasks serve as both the training set for the optimizer and the
evaluation set. This follows the standard protocol used by prior work on
Terminal-Bench~2.0/2.1 (including AHE and Meta-Harness), which optimize
directly on the full task set and report pass rate over the same 89 tasks.
Because there is no held-out set, the reported TerminalBench results are
in-sample optimized-task rates rather than held-out scores
(Figure~\ref{fig:main-results}(b)).

\paragraph{Metrics.}
\emph{AppWorld:} strict pass/fail per task. Each task has multiple test
requirements (assertions on database state, answer correctness, side effects).
A task scores $1.0$ only if \emph{all} requirements pass; $0.0$ if any fails.
The test suite is deterministic (time frozen, DB state compared exactly).
\textbf{Task Goal Completion (TGC)} = average of per-task scores across all
tasks. \textbf{Scenario Goal Completion (SGC)} = for each scenario (a group of
related tasks sharing state), take the minimum score among its tasks (an entire
scenario passes only if \emph{every} task in it passes), then average across
scenarios. TGC and SGC are reported separately on the Normal and Challenge
splits (Table~\ref{tab:appworld}).
\emph{TerminalBench~2.1:} pass/fail (binary) per task; reported as pass rate
over all 89 tasks (Figure~\ref{fig:main-results}(b)).
\emph{MuSiQue:} nDCG@10 and Recall@10, both computed over the top-10 retrieved
documents per query (Table~\ref{tab:musique}). nDCG@10 (normalized
Discounted Cumulative Gain at cutoff 10) measures ranking quality: it rewards
placing golden documents higher in the list, with logarithmically diminishing
returns for lower ranks. Recall@10 measures coverage: the fraction of all
golden documents that appear in the top-10 retrieved results. Both metrics are
standard information-retrieval measures and are computed deterministically from
the ranked output of the retrieval pipeline.

The archived AppWorld final-result records provide TGC and SGC values over the
reported Normal and Challenge task sets. They do not establish whether the final
evaluator, optimization reward, minibatches, or validation gate used the same
interface, or how any optimized artifact configuration was produced; no stronger reproducibility
claim is made here.

\paragraph{Per-method launch parameters.}
The documented framework settings for TerminalBench~2.1 and MuSiQue
use a GLM-5.1 Pi coding-agent proposer; AppWorld proposer settings are
unavailable. The documented settings share
\texttt{max\_turns}$=200$, \texttt{timeout}$=3600$\,s,
\texttt{skip\_perfect\_score\_runs}$=$False, \texttt{last\_n\_analysis}$=10$,
\texttt{epochs}$=3$.

\emph{AppWorld.} The archived local records establish only the $168$
\texttt{test\_normal} and $417$ \texttt{test\_challenge} static final
evaluations of optimized skill configurations. No optimizer manifests exist;
optimization-run settings, system and proposer assignments, minibatches,
validation procedures, budgets, and baseline launch parameters are unavailable.
The labels 1-slot and 2-slot have no documented historical meaning. No AppWorld
protocol beyond those static records is claimed.
\emph{TerminalBench~2.1}:
\begin{itemize}[leftmargin=2em,itemsep=0.15em,topsep=0.1em]\raggedright
  \item \textsc{Mara Chain} (system GLM-5, system concurrency $4$, evaluator
        concurrency $4$, proposer concurrency $20$,
        \texttt{max\_candidate\_num}$=1$, \texttt{batch\_size}$=4$,
        budget $100$\,h wall-clock):
    \begin{itemize}[leftmargin=1.5em,itemsep=0.05em,topsep=0.05em]
      \item $+$\mc: \texttt{num\_proposals}$=1$,
            \texttt{max\_reflection\_iterations}$=5$.
      \item $-$\mc: \texttt{num\_proposals}$=1$,
            \texttt{max\_reflection\_iterations}$=0$.
    \end{itemize}
  \item AHE~\citep{lin2026ahe}: system model GLM-5, analyzer/improve model
        GLM-5.1 (max tokens $64000$, temperature $0.2$),
        max iterations $10$, target pass rate $0.95$, pass@$k$ with $k{=}2$,
        eval concurrency $4$, auto-rollback-harmful True, real NexAU
        evolve-agent enabled, budget $100$\,h wall-clock. AHE's method
        (AgentDebugger analyze $\to$ NexAU evolve-agent mutate $\to$ falsify
        with verdict $\to$ optional auto-rollback) is faithfully reproduced
        from the open-source repo
        (\url{https://github.com/china-qijizhifeng/agentic-harness-engineering})
        on TerminalBench~2.1, evolving the kira harness. The base model is GLM-5 (vs.\ the paper's
        GPT-5.4); absolute scores are not directly comparable to the paper,
        only same-environment comparisons are valid. For a fair same-environment
        comparison with \textsc{Mara Chain} (which does not use external
        knowledge), we \emph{disabled AHE's external-knowledge acquisition}---its
        retrieval of outside context during harness evolution---in our runs.
  \item Meta-Harness~\citep{lee2026metaharness}: reproduced from the open-source
        Meta-Harness repo
        (\url{https://github.com/stanford-iris-lab/meta-harness}), which we
        adapt to the kira harness and evaluate under local
        Docker on the Terminal-Bench~2.1. Launch parameters: system model GLM-5, proposer model GLM-5.1 run as a Claude Code
        coding agent, pass@$k$ with $k{=}2$ (two trials per task), eval concurrency $4$, and a
        total budget of $100$\,h wall-clock.
\end{itemize}
\emph{MuSiQue}:
\begin{itemize}[leftmargin=2em,itemsep=0.15em,topsep=0.1em]\raggedright
  \item \textsc{Mara Chain} (non-agentic pipeline; evaluator concurrency $8$,
        proposer concurrency $10$, \texttt{max\_candidate\_num}$=3$,
        budget $20$\,h wall-clock):
        \texttt{num\_proposals}$=2$ (2-slot), \texttt{batch\_size}$=6$,
        \texttt{min\_improvement\_per\_batch}$=2.0$; the reported optimized
        pipeline uses \texttt{max\_reflection\_iterations}$=3$.
        The pipeline's internal node models (BAAI/bge-m3 for dense recall,
        BAAI/bge-reranker-v2-m3 for cross-encoder rerank~\citep{chen2024bgem3}, GLM-5.2 for
        \texttt{query\_rewrite}) are part of the evolved pipeline
        (App.~\ref{app:pipelinespec}).
\end{itemize}

\section{AppWorld Score Versus Optimization Time}
\label{app:timecurve}

Figure~\ref{fig:appworld-time} plots the same AppWorld optimization
trajectories as Figure~\ref{fig:main-results}(a), but against effective
wall-clock time instead of optimization rollouts. The efficiency gap is not an
artifact of per-rollout cost differences: \mc reaches the $0.8$ validation
score in $11.9$\,h, about $3\times$ faster than GEPA ($35.8$\,h), while ACE
and SkillOpt-Lite do not reach $0.8$ at any point in the recorded runs and
plateau at $0.770$ and $0.665$, respectively. \mc also attains the highest
final score ($0.870$ versus GEPA's $0.805$), matching the rollout-based view.

\begin{figure}[h]
\centering
\includegraphics[width=0.62\linewidth]{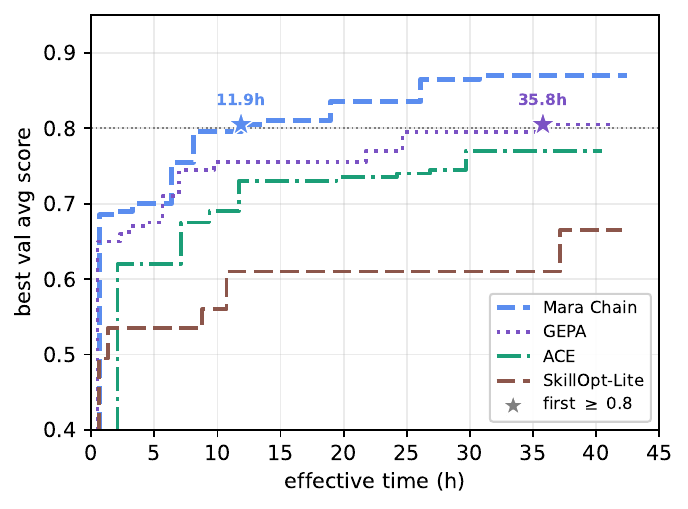}
\caption{AppWorld validation best score versus effective optimization time
(hours); the wall-clock companion of Figure~\ref{fig:main-results}(a). The
dotted line marks the $0.8$ validation score and $\bigstar$ annotates each
method's first recorded point at or above it. \mc reaches $0.8$ in $11.9$\,h,
about $3\times$ faster than GEPA ($35.8$\,h), and attains the highest final
score; ACE (final $0.770$) and SkillOpt-Lite ($0.665$) never reach $0.8$.}
\label{fig:appworld-time}
\end{figure}

\section{TerminalBench~2.1 Per-Task Reward}
\label{app:tb2-cross}

Table~\ref{tab:tb2-cross} lists the per-task reward ($r\in\{0,1\}$) on the
89-task set behind Figure~\ref{fig:main-results}(b). Column means reproduce
the reported pass rates: $51.7$, $49.4$, $51.7$, $71.9$, $57.3$, $51.7$,
$49.4\%$. Columns (left $\to$ right): Codex, OpenCode, Kira base,
\textsc{Mara Chain} (ours), $-$\mc ablation, AHE, Meta-Harness. All seven use
the same frozen GLM-5 system model.

\begin{small}
\setlength{\tabcolsep}{3pt}
\begin{longtable}{@{}>{\raggedright\arraybackslash\ttfamily}p{0.42\linewidth}ccccccc@{}}
\caption{TerminalBench~2.1 per-task reward ($1$ = task passes, $0$ = otherwise).} \\
\label{tab:tb2-cross} \\
\toprule
Task & \rotatebox{70}{Codex} & \rotatebox{70}{OpenCode} & \rotatebox{70}{Kira} & \rotatebox{70}{\textsc{Mara Chain}} & \rotatebox{70}{$-$\mc} & \rotatebox{70}{AHE} & \rotatebox{70}{Meta-Harness} \\
\midrule
\endfirsthead
\caption[]{TerminalBench~2.1 per-task reward (continued).} \\
\toprule
Task & \rotatebox{70}{Codex} & \rotatebox{70}{OpenCode} & \rotatebox{70}{Kira} & \rotatebox{70}{\textsc{Mara Chain}} & \rotatebox{70}{$-$\mc} & \rotatebox{70}{AHE} & \rotatebox{70}{Meta-Harness} \\
\midrule
\endhead
\midrule
\multicolumn{8}{r}{\footnotesize continued on next page} \\
\endfoot
\bottomrule
\endlastfoot
\texttt{bn-fit-modify} & 1 & 1 & 1 & 1 & 1 & 1 & 1 \\
\texttt{build-pmars} & 1 & 1 & 1 & 1 & 1 & 1 & 1 \\
\texttt{cobol-modernization} & 1 & 1 & 1 & 1 & 1 & 1 & 1 \\
\texttt{constraints-scheduling} & 1 & 1 & 1 & 1 & 1 & 1 & 1 \\
\texttt{count-dataset-tokens} & 1 & 1 & 1 & 1 & 1 & 1 & 1 \\
\texttt{crack-7z-hash} & 1 & 1 & 1 & 1 & 1 & 1 & 1 \\
\texttt{custom-memory-heap-crash} & 1 & 1 & 1 & 1 & 1 & 1 & 1 \\
\texttt{distribution-search} & 1 & 1 & 1 & 1 & 1 & 1 & 1 \\
\texttt{feal-differential-cryptanalysis} & 1 & 1 & 0 & 1 & 0 & 0 & 0 \\
\texttt{financial-document-processor} & 1 & 1 & 1 & 1 & 0 & 1 & 0 \\
\texttt{fix-code-vulnerability} & 1 & 1 & 1 & 1 & 1 & 1 & 0 \\
\texttt{fix-ocaml-gc} & 1 & 1 & 0 & 1 & 1 & 1 & 0 \\
\texttt{git-leak-recovery} & 1 & 1 & 1 & 1 & 1 & 1 & 1 \\
\texttt{headless-terminal} & 1 & 1 & 1 & 1 & 1 & 1 & 1 \\
\texttt{hf-model-inference} & 1 & 1 & 1 & 1 & 1 & 1 & 1 \\
\texttt{large-scale-text-editing} & 1 & 1 & 1 & 1 & 0 & 1 & 1 \\
\texttt{largest-eigenval} & 1 & 1 & 0 & 1 & 1 & 1 & 1 \\
\texttt{llm-inference-batching-scheduler} & 1 & 1 & 1 & 1 & 0 & 1 & 1 \\
\texttt{log-summary-date-ranges} & 1 & 1 & 1 & 1 & 1 & 1 & 0 \\
\texttt{mcmc-sampling-stan} & 1 & 1 & 1 & 1 & 1 & 1 & 1 \\
\texttt{merge-diff-arc-agi-task} & 1 & 1 & 1 & 1 & 0 & 0 & 1 \\
\texttt{modernize-scientific-stack} & 1 & 1 & 1 & 1 & 1 & 1 & 1 \\
\texttt{multi-source-data-merger} & 1 & 1 & 1 & 1 & 1 & 1 & 1 \\
\texttt{nginx-request-logging} & 1 & 1 & 1 & 1 & 1 & 1 & 1 \\
\texttt{openssl-selfsigned-cert} & 1 & 1 & 1 & 1 & 1 & 1 & 1 \\
\texttt{portfolio-optimization} & 1 & 1 & 0 & 1 & 1 & 1 & 0 \\
\texttt{prove-plus-comm} & 1 & 1 & 1 & 1 & 1 & 1 & 1 \\
\texttt{pytorch-model-cli} & 1 & 1 & 1 & 1 & 1 & 1 & 1 \\
\texttt{pytorch-model-recovery} & 1 & 1 & 1 & 1 & 1 & 1 & 1 \\
\texttt{regex-log} & 1 & 1 & 1 & 1 & 0 & 1 & 1 \\
\texttt{sparql-university} & 1 & 1 & 1 & 1 & 1 & 0 & 0 \\
\texttt{sqlite-db-truncate} & 1 & 1 & 1 & 1 & 1 & 1 & 1 \\
\texttt{sqlite-with-gcov} & 1 & 1 & 1 & 1 & 1 & 1 & 1 \\
\texttt{tune-mjcf} & 1 & 1 & 1 & 1 & 0 & 0 & 0 \\
\texttt{vulnerable-secret} & 1 & 1 & 1 & 1 & 1 & 1 & 1 \\
\texttt{cancel-async-tasks} & 1 & 0 & 1 & 1 & 1 & 0 & 0 \\
\texttt{code-from-image} & 1 & 0 & 0 & 1 & 0 & 1 & 1 \\
\texttt{compile-compcert} & 1 & 0 & 1 & 1 & 0 & 0 & 0 \\
\texttt{configure-git-webserver} & 1 & 0 & 0 & 1 & 1 & 1 & 1 \\
\texttt{fix-git} & 0 & 1 & 1 & 1 & 1 & 1 & 1 \\
\texttt{git-multibranch} & 1 & 0 & 1 & 1 & 1 & 1 & 1 \\
\texttt{overfull-hbox} & 1 & 0 & 1 & 1 & 1 & 1 & 1 \\
\texttt{password-recovery} & 0 & 1 & 1 & 1 & 1 & 1 & 1 \\
\texttt{path-tracing-reverse} & 0 & 1 & 1 & 1 & 0 & 0 & 0 \\
\texttt{pypi-server} & 0 & 1 & 1 & 1 & 1 & 1 & 1 \\
\texttt{qemu-alpine-ssh} & 0 & 1 & 0 & 1 & 1 & 1 & 1 \\
\texttt{qemu-startup} & 0 & 1 & 0 & 1 & 1 & 0 & 1 \\
\texttt{query-optimize} & 1 & 1 & 0 & 0 & 1 & 0 & 0 \\
\texttt{reshard-c4-data} & 1 & 0 & 1 & 1 & 0 & 0 & 1 \\
\texttt{sanitize-git-repo} & 1 & 0 & 0 & 1 & 0 & 0 & 0 \\
\texttt{torch-tensor-parallelism} & 1 & 0 & 0 & 1 & 1 & 0 & 0 \\
\texttt{winning-avg-corewars} & 0 & 1 & 0 & 1 & 1 & 0 & 0 \\
\texttt{adaptive-rejection-sampler} & 0 & 0 & 0 & 1 & 1 & 0 & 0 \\
\texttt{break-filter-js-from-html} & 0 & 0 & 0 & 1 & 1 & 1 & 1 \\
\texttt{build-pov-ray} & 0 & 1 & 1 & 0 & 1 & 0 & 0 \\
\texttt{circuit-fibsqrt} & 0 & 0 & 0 & 1 & 1 & 0 & 0 \\
\texttt{db-wal-recovery} & 0 & 0 & 0 & 1 & 1 & 0 & 0 \\
\texttt{extract-elf} & 0 & 0 & 0 & 1 & 0 & 1 & 1 \\
\texttt{install-windows-3.11} & 0 & 0 & 0 & 1 & 0 & 0 & 0 \\
\texttt{kv-store-grpc} & 0 & 0 & 1 & 1 & 1 & 1 & 1 \\
\texttt{mailman} & 0 & 0 & 1 & 1 & 1 & 1 & 1 \\
\texttt{model-extraction-relu-logits} & 0 & 0 & 0 & 1 & 0 & 0 & 0 \\
\texttt{mteb-retrieve} & 0 & 0 & 0 & 1 & 1 & 0 & 0 \\
\texttt{polyglot-c-py} & 0 & 0 & 1 & 1 & 1 & 1 & 1 \\
\texttt{protein-assembly} & 0 & 0 & 0 & 1 & 0 & 0 & 0 \\
\texttt{rstan-to-pystan} & 0 & 0 & 0 & 1 & 1 & 0 & 1 \\
\texttt{write-compressor} & 1 & 0 & 0 & 0 & 0 & 0 & 0 \\
\texttt{build-cython-ext} & 0 & 0 & 0 & 0 & 0 & 0 & 0 \\
\texttt{caffe-cifar-10} & 0 & 0 & 0 & 0 & 0 & 0 & 0 \\
\texttt{chess-best-move} & 0 & 0 & 0 & 0 & 0 & 0 & 0 \\
\texttt{dna-assembly} & 0 & 0 & 0 & 0 & 0 & 0 & 0 \\
\texttt{dna-insert} & 0 & 0 & 0 & 0 & 0 & 0 & 0 \\
\texttt{extract-moves-from-video} & 0 & 0 & 0 & 0 & 0 & 0 & 0 \\
\texttt{feal-linear-cryptanalysis} & 0 & 0 & 0 & 0 & 0 & 0 & 0 \\
\texttt{filter-js-from-html} & 0 & 0 & 0 & 0 & 0 & 1 & 0 \\
\texttt{gcode-to-text} & 0 & 0 & 0 & 0 & 0 & 0 & 0 \\
\texttt{gpt2-codegolf} & 0 & 0 & 0 & 0 & 0 & 0 & 0 \\
\texttt{make-doom-for-mips} & 0 & 0 & 0 & 0 & 0 & 0 & 0 \\
\texttt{make-mips-interpreter} & 0 & 0 & 0 & 0 & 0 & 0 & 0 \\
\texttt{mteb-leaderboard} & 0 & 0 & 0 & 0 & 0 & 0 & 0 \\
\texttt{path-tracing} & 0 & 0 & 0 & 0 & 0 & 0 & 0 \\
\texttt{polyglot-rust-c} & 0 & 0 & 1 & 0 & 1 & 1 & 0 \\
\texttt{raman-fitting} & 0 & 0 & 0 & 0 & 0 & 0 & 0 \\
\texttt{regex-chess} & 0 & 0 & 0 & 0 & 0 & 0 & 0 \\
\texttt{sam-cell-seg} & 0 & 0 & 0 & 0 & 0 & 0 & 0 \\
\texttt{schemelike-metacircular-eval} & 0 & 0 & 1 & 0 & 0 & 0 & 0 \\
\texttt{torch-pipeline-parallelism} & 0 & 0 & 0 & 0 & 0 & 0 & 1 \\
\texttt{train-fasttext} & 0 & 0 & 0 & 0 & 0 & 0 & 0 \\
\texttt{video-processing} & 0 & 0 & 0 & 0 & 0 & 0 & 0 \\
\end{longtable}
\end{small}

\section{MuSiQue Pipeline Artifact Configuration (Before/After)}
\label{app:pipelinespec}

The MuSiQue retrieval-pipeline artifact configuration is a JSON DAG of nodes, each with a type, per-node
parameters, and (for the rewrite node) a prompt file. \textsc{Mara Chain} evolves
this directory.

\paragraph{Default (baseline).}
{\scriptsize\begin{verbatim}
recall_bm25(k=10) -> recall_dense(k=10, bge_m3) ->
rerank_cross(top_m=100, bge-reranker-v2-m3) -> graph_rescore(a=0.3) ->
truncate(k=10)
\end{verbatim}}

\paragraph{Evolved.}
{\scriptsize\begin{verbatim}
query_rewrite(mode=llm, GLM-5.2, prompt_file=rewrite, T=0, max_tokens=128) ->
recall_bm25(k=30, use_rewritten=true) ->
recall_dense(k=30, bge_m3, use_rewritten=true, use_dual_query=true) ->
fuse_rrf(rrf_k=60) ->
graph_expand(top_n=40, min_entity_overlap=1, max_expand_per_group=10) ->
rerank_cross(top_m=30, bge-reranker-v2-m3, use_rewritten=false) ->
graph_rescore(a=0.2) -> truncate(k=30)
\end{verbatim}}

The proposer edited three kinds of assets: \emph{node code} (added
\texttt{query\_rewrite}, \texttt{fuse\_rrf}~\citep{cormack2009rrf},
\texttt{graph\_expand}; modified recall $k$ $10\!\to\!30$, rerank \texttt{top\_m}
$100\!\to\!30$, \texttt{graph\_rescore} $\alpha$ $0.3\!\to\!0.2$; rewired the
DAG), a \emph{prompt} (the rewrite node's synonym/term-expansion prompt), and
\emph{hyperparameters} (RRF \texttt{rrf\_k}$=60$, \texttt{graph\_expand}
\texttt{top\_n}$=40$ / \texttt{max\_expand\_per\_group}$=10$).

\paragraph{Ablation: the chain's contribution on MuSiQue.}
The ablation in Table~\ref{tab:ablation} (App.~\ref{app:framework}) isolates
the \mc's contribution on this retrieval-pipeline artifact configuration. At
the same 2-slot budget, disabling the chain ($-$\mc: single-shot search,
Mara-chain depth $0$) still improves over the hand-written default ($+0.070$
test nDCG@10), but trails the full setting by $0.034$ (test) and $0.048$ (val)
nDCG@10 and by $0.039$ (test) and $0.045$ (val) Recall@10. The chain therefore
accounts for roughly a third of the total test gain, mirroring the ablation
pattern on AppWorld and TerminalBench~2.1.

\section{Analogy to Classical Optimization}
\label{app:analogy}

The framework main loop of is a recognizable instance of classical iterative
optimization, with the artifact configuration in place of the parameter vector and the
\mc in place of \emph{momentum}---an operational, not decorative,
analogy. A classical optimizer carries a correction term across steps so the
next update builds on the last rather than restarting from zero, and the
\mc plays exactly this role: accumulating diagnoses and edits
down a single lineage so the next attempt starts from the residual the
previous one left, instead of discarding it the way single-shot search does
(Thms.~\ref{thm:wall}--\ref{thm:rc}). What is \emph{not} classical is the
substrate: the ``gradient'' is a rollout-derived edit attributed to a traced
failure, and the ``correction term'' is a diagnosis in natural language and
code, carried on a filesystem---which is why a coding agent, not a fixed
update rule, is the optimizer.

\begin{table}[h]
\centering
\small
\caption{The evolutionary-optimization analogy is \emph{operational rather than
decorative}: each row has a functional equivalent in the loop.}
\label{tab:analogy}
\begin{tabular}{ll}
\toprule
Known concept & framework functional equivalent \\
\midrule
parameter & artifact configuration (artifact directory tree) \\
search space & schema-instantiated artifact tree (not a fixed DSL) \\
gradient direction & rollout-derived edit direction (\texttt{ArtifactAction}) \\
learning rate & edit budget per epoch \\
optimizer & reflective proposer (two-phase coding agent) \\
momentum & \mc (correction accumulated across a chain of failures) \\
early stopping / checkpoint & candidate state machine + $(epoch,\,dataset\_index)$ resume \\
\bottomrule
\end{tabular}
\end{table}

\section{Persistent Failure Barrier Case Studies (Full)}
\label{app:cases}

This appendix presents four illustrative traces relevant to
\S\ref{app:cases:barriers}; they do not establish the assumptions of
Thms.~\ref{thm:wall}--\ref{thm:rc}. Because tasks run in batches (\texttt{batch\_size}$=4$),
a propose that earns reward on one task may be accepted even if another task
in the batch still scores $0$, so the other task's partial fix is
\emph{incidentally merged} into the lineage on someone else's reward. Such
merges are unstable---they occur by coattail, not because the failing task was
targeted---and fall far short of the targeted re-diagnosis that resolves the
persistent failure barriers below.

\subsection{Persistent failure barrier case studies}
\label{app:cases:barriers}

\takeaway{The recorded traces contain four depth-$4$ passing descendants after
zero-scoring ancestors; they are illustrative, not causal identification.}
On TerminalBench~2.1, the recorded $-$\mc configuration does not pass the
listed tasks while a recorded retained lineage does. Each listed lineage has
zero-scoring ancestors and a later passing descendant. We trace four cases
(App.~\ref{app:cases} for full traces): three TerminalBench persistent failure
barriers resolved at
depth~$4$---\texttt{sanitize-git-repo},
\texttt{model-extraction-relu-logits} (Table~\ref{tab:case-extract}), and
\texttt{protein-assembly} (Table~\ref{tab:case-protein})---and one AppWorld
scenario, \texttt{29caf6f} (Table~\ref{tab:case-appworld-chain},
Fig.~\ref{fig:case-appworld}). In all four, every depth-$<4$ ancestor scores
$0$ and only the depth-$4$ child passes; in three the breakthrough asset is
general and family-reusable (\texttt{protein-assembly} is cleared by a
protein-specific rule). On the AppWorld persistent failure barrier, $+$\mc is stable and
$-$\mc unstable (Fig.~\ref{fig:case-appworld}, App.~\ref{app:cases}).
The following illustration traces one refinement lineage end to end;
the other three persistent residuals are traced below.

\subsection{Without \mc: Persistent Residuals}
\label{app:cases:wo}

Theorem~\ref{thm:wall} applies only under its residual-completeness and
accepted-update assumptions. The recorded $-$\mc run retires $78$ of $104$
candidates, including candidates with partial corrections.

\paragraph{\texttt{sanitize-git-repo}.} More than ten propose batches analyze
this persistent failure barrier, each adding another partial prose fix (a length-constrained-pattern
rule, a no-truncation rule, a \texttt{grep -roh} procedure, a repo-wide-search
rule, a git-checkout caveat). Several are re-made---the same diagnosis recurs
in three separate batches. Yet every sanitize run scores $0$ ($9$ candidates).
The partials either merge for the wrong reason (other tasks in the batch pass
while sanitize stays $0$) or are discarded; in both cases the partial stays at
prose (never raised to a runnable script), so the buried token is never
surfaced. The partial correction is neither refined on the same evidence nor
transferred to a higher-reliability script, so the residual persists.

\paragraph{\texttt{model-extraction-relu-logits}.} Four propose entries each
re-add essentially the same partial---a prose ``black-box extraction'' rule.
The partial is always prose (no runnable extraction script), so all $9$ runs
score $0$. Independent proposals repeatedly recreate the same prose partial,
but do not retain its evidence long enough to construct the extraction algorithm
the task requires.

\paragraph{\texttt{protein-assembly}.} $8$ candidates, all score $0$ on
\texttt{test\_gblock}---the agent uses HIV-1 capsid instead of the FLAG tag, or
strips tags from the specified source. $-$\mc does diagnose specific
errors (``HIV-1 capsid instead of FLAG tag'', ``no gBlock reference''), but
these candidates are retired; none builds the full correction chain (use the
specified source verbatim, handle X-residues, verify the first amino acid,
check the fusion order). Because the candidates are retired independently, these
partial corrections are not accumulated into a complete sequence.

\paragraph{AppWorld \texttt{29caf6f}.} $-$\mc solves only
$0.16$/$0.21$/$0.21$ of runs across the scenario's three tasks, every failure
tripping the \emph{same} sub-condition---``the reply names a non-requested
movie''---reproduced verbatim $54\times$. The skill artifact configuration already tells the agent to
extract the requested director and filter by it, but the rule is a
bypassable prose/regex asset ($d<\theta$); the agent discards the director and
sends all $21$ movies $54\times$. The extract-and-filter rule is a useful but
incomplete low-reliability artifact; its execution evidence is discarded rather
than used to refine the candidate or strengthen the artifact representation.

\subsection{With \mc: convergence}
\label{app:cases:rc}

The \mc restructures search as a single lineage
$\kappa=(\sigma_0,\ldots,\sigma_T)$. Each step conditions on the preceding
residual, rollout analysis, and artifact-change history; it may also select a
higher-reliability artifact representation when the evidence indicates bypass.
Newly observable failures become the residual for the next refinement step.
Theorem~\ref{thm:rc} gives a bound only when its positive-support assumption is
made. In these recorded traces, four passing descendants occur at depth~$4$ after
zero-scoring ancestors. The illustration gives the
\texttt{sanitize-git-repo} workflow, Tables~\ref{tab:case-extract}--
\ref{tab:case-protein} give the remaining TerminalBench traces, and
Table~\ref{tab:case-appworld-chain} gives the AppWorld scenario.

\paragraph{\texttt{sanitize-git-repo}: a $4$-step refinement builds the script.}
Each node conditions on the preceding
residual; the artifact changes from prose to script; and the depth-$4$ ``print
only the matched token'' edit corrects the script's own output behavior, which
became observable only after the script was introduced. Retaining each residual
allows the three failure modes to be addressed sequentially rather than
rediscovered independently.

\paragraph{\texttt{model-extraction}: a $4$-deep bloodline rewrites the
algorithm (Table~\ref{tab:case-extract}).} Winner score $1.0$, $20/20$ rows
match, cosine $>0.99$; every ancestor scores $0$. The artifact evolves from
prose to reference material and then to a script. The two script fixes address
distinct retained residuals---``too slow'' (execution) and ``over-counts''
(algorithmic correctness)---and the final rewrite corrects behavior observable
only after a runnable script exists.

\begin{table}[h]
\centering\small
\caption{Model extraction: one bloodline, depth $0\!\to\!4$, every ancestor
scoring~$0$. The artifact evolves from prose to reference material and then to
a script; the depth-$3$/$4$ algorithm rewrite corrects behavior exposed only
after the script is runnable.}
\label{tab:case-extract}
\setlength{\tabcolsep}{3pt}
\begin{tabularx}{\linewidth}{@{}r l l Y Y@{}}
\toprule
rd & candidate & form & asset edited & residual $\delta$ (drives next) \\
\midrule
0 & \texttt{f95e6890cfa1} & (start) & (inherits parent, no task-specific edit) & agent \texttt{cat}s \texttt{forward.py} (not black-box) \\
0 & \texttt{dd5d0e8a3382} & prose+ref & \texttt{SKILL} black-box rule + extraction ref & evaluator \texttt{np.random.seed} pollution \\
1 & \texttt{38899c80b3fe} & ext+ref & random-state isolation & correct method, no runnable impl \\
2 & \texttt{fe8a3fbab6c9} & ref+scr & \texttt{extract-relu-weights.py} v0 (freq.\ clustering) & too slow; agent abandons \\
3 & \texttt{d87d45d2e963} & ref+scr & efficiency: $2000\to800$ samples & over-counts: $47$ vs $20$ neurons \\
4 & \texttt{38b2f87817e4}\,\checkmark & ext+ref+scr & algorithm rewrite: line-scan & \textbf{PASS} \\
\bottomrule
\end{tabularx}
\end{table}

\paragraph{\texttt{protein-assembly}: a $4$-deep bloodline corrects the fusion
sequence (Table~\ref{tab:case-protein}).} It inherits the model-extraction
winner's artifact configuration, then goes its own depth-$4$ route on the protein
persistent failure barrier. The barrier is resolved by a \emph{protein-specific} X-residue rule---the one genuinely
per-task patch---though the same winning node also banks a general
cross-compiler that serves other tasks.

\begin{table}[h]
\centering\small
\caption{Protein assembly: one bloodline, depth $0\!\to\!4$, every ancestor
scoring~$0$. $\dagger$ marks a protein-specific fix and $^{\text{gen}}$ a
general (reusable) fix.}
\label{tab:case-protein}
\setlength{\tabcolsep}{3pt}
\begin{tabularx}{\linewidth}{@{}r l l Y Y@{}}
\toprule
rd & candidate & form & asset edited & residual $\delta$ (drives next) \\
\midrule
0 & \texttt{899d9d350389} & ext+prose & FC fix in \texttt{kira\_agent}$^{\text{gen}}$ & uses HIV-1 capsid instead of FLAG tag \\
0 & \texttt{52aa7b868cdc} & ext+prose & terminal spirals in \texttt{kira\_agent}$^{\text{gen}}$ & wrong source, strips tags from data \\
1 & \texttt{51e925ee9e77} & ext+prose & data-source checklist in \texttt{kira\_agent}$^\dagger$ + SKILL rule$^\dagger$ & assembly still wrong \\
2 & \texttt{216373b50662} & ext+prose & verbatim / His-tag rule in SKILL$^\dagger$ + runtime mod$^{\text{gen}}$ & X-residue handling wrong \\
3 & \texttt{c0f841ac9754} & ext+prose+scr & data-source rule in SKILL$^\dagger$ + tool alias$^\text{gen}$ + sanitizer$^\text{gen}$ & length offset, first amino acid \\
4 & \texttt{fddaf3b3e11c}\,\checkmark & ext+prose+scr & X-residue rule in SKILL$^\dagger$ + cross-compiler$^\text{gen}$ & \textbf{PASS} \\
\bottomrule
\end{tabularx}
\end{table}

\paragraph{AppWorld \texttt{29caf6f}: the chain hardens the scenario's shared
\texttt{SKILL.md} (Table~\ref{tab:case-appworld-chain},
Fig.~\ref{fig:case-appworld}).} Early fixes (gen~$3$--$7$) already make
$+$\mc's \texttt{\_2}/\texttt{\_3} pass rate far exceed $-$\mc's $0.21$
($0.60$--$0.75$ at gen~$2$--$3$), while $-$\mc stays at $0.21$
throughout---the early fixes, not just the late chain, are what makes
$+$\mc stable. The late chain (gen~$15$--$16$) adds the
requester-retrieval gate + director isolation (peak val $0.855$). Because the
three tasks share one \texttt{SKILL.md}, the hardening fixes
\texttt{\_2}/\texttt{\_3} \emph{and} stabilizes \texttt{\_1}.
Figure~\ref{fig:case-appworld} shows the pass/fail pattern: $+$\mc is
stable, $-$\mc unstable.

\begin{table}[h]
\centering\small
\caption{AppWorld \texttt{29caf6f}: key RC fixes on the shared \texttt{SKILL.md}
that harden the scenario, in generational order. Early fixes (gen~$3$--$7$)
already make WITH's pass rate far exceed $-$\mc's $0.21$; the late chain
(gen~$15$--$16$) adds the requester-retrieval gate + director isolation.}
\label{tab:case-appworld-chain}
\setlength{\tabcolsep}{4pt}
\begin{tabularx}{\linewidth}{@{}r l l Y l@{}}
\toprule
gen & rd & node & edit & solved \\
\midrule
3 & 1 & \texttt{39102e8295c9} & fix login rule, pagination order, substring collisions & \texttt{\_3} \\
4 & 1 & \texttt{ec65a28a8e6e} & step-$6$ banking: paginate all contacts + client-side filter (prerequisite; seeds checkpoint) & locates msg \\
4 & 2 & \texttt{f66bfcd1ab39} & prevent context overflow from exhaustive message pagination & \texttt{\_2},\,\texttt{\_3} \\
7 & 0 & \texttt{9ccdf0fbfefe} & add \texttt{pagination\_helper.py} + prevent restart loops ($+$\mc-only checkpoint) & sustains \\
14 & 1 & \texttt{a810bbf06036} & fix fragile parsing (line-by-line extraction) & \texttt{\_2},\,\texttt{\_3} \\
15 & 1 & \texttt{6ebfc3cbb94b} & enforce requester-message retrieval gate (root cause: agent never retrieved the request) & \texttt{\_2} \\
16 & 2 & \texttt{35a92e7e209d}\,\checkmark & isolate requested director in a variable; filter by it (peak val $0.855$) & all three \\
\bottomrule
\end{tabularx}
\end{table}

\begin{figure}[h]
\centering
\includegraphics[width=\linewidth]{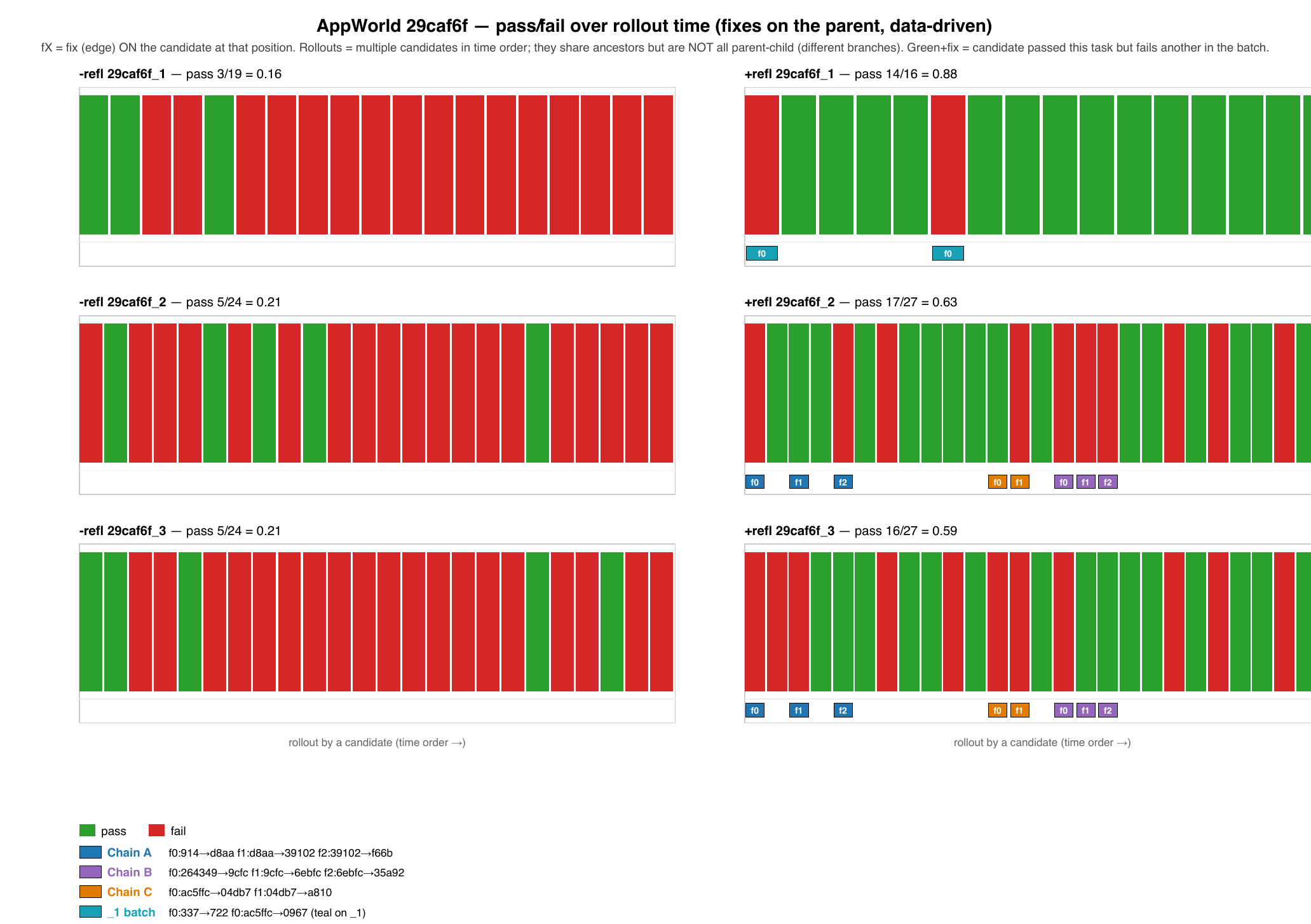}
\caption{AppWorld \texttt{29caf6f}: pass/fail of each task's in-run rollouts
during the $\pm$\mc ablation (green=pass, red=fail; left-to-right in
rollout time). $-$\mc (left) is unstable; $+$\mc (right) is
stable. Colored squares mark fixes (first rollout only); same color = same
chain. $f0/f1/f2$ = a fix producing an rd-$0/1/2$ candidate.}
\label{fig:case-appworld}
\end{figure}

\paragraph{Generality.}
Three of the four breakthrough assets are general and family-reusable:
\texttt{search-filtered.sh} takes any prefix set and prints only matched
tokens; \texttt{extract-relu-weights.py} auto-detects input dim and scans
random 1-D lines for any black-box ReLU net; the AppWorld extract-and-filter
serves any ``reply with items matching a requested attribute'' task.
\texttt{protein-assembly} is the exception (a protein-specific rule).

\takeaway{Four persistent residuals, one mechanism.}
Under the assumptions of Theorem~\ref{thm:wall}, unsupported accepted updates
preserve a residual. These traces show retained rollout evidence and artifact
history across later edits, but do not establish that retention alone caused the
observed passes or that budget played no role.

\end{document}